\def\1{\bm{1}}
\def\vtheta{{\boldsymbol{\theta}}}
\def\va{{\bm{a}}}
\def\vb{{\bm{b}}}
\def\ve{{\bm{e}}}
\def\vg{{\bm{g}}}
\def\vo{{\bm{o}}}
\def\vr{{\bm{r}}}
\def\vs{{\bm{s}}}
\def\vu{{\bm{u}}}
\def\vx{{\bm{x}}}
\def\vy{{\bm{y}}}
\def\vz{{\bm{z}}}
\def\mA{{\bm{A}}}
\def\mB{{\bm{B}}}
\def\mC{{\bm{C}}}
\def\mD{{\bm{D}}}
\def\mF{{\bm{F}}}
\def\mG{{\bm{G}}}
\def\mH{{\bm{H}}}
\def\mJ{{\bm{J}}}
\def\mK{{\bm{K}}}
\def\mO{{\bm{O}}}
\def\mS{{\bm{S}}}
\def\mW{{\bm{W}}}
\def\mX{{\bm{X}}}
\def\mY{{\bm{Y}}}
\def\mLambda{{\boldsymbol{\Lambda}}}
\def\mSigma{{\boldsymbol{\Sigma}}}
\DeclareMathAlphabet{\mathsfit}{\encodingdefault}{\sfdefault}{m}{sl}
\SetMathAlphabet{\mathsfit}{bold}{\encodingdefault}{\sfdefault}{bx}{n}
\def\sX{{\mathbb{X}}}
\newcommand{\E}{\mathbb{E}}
\newcommand{\R}{\mathbb{R}}
\newcommand{\softmax}{\mathrm{softmax}}
\newcommand{\GGN}{\bm{\mathrm{GGN}}}
\newcommand{\concat}{\mathrm{concat}}
\renewcommand{\vec}{\mathrm{vec}}
\newcommand{\diag}[1]{\mathrm{diag}(#1)}
\renewcommand{\R}{\mathbb{R}}
\renewcommand{\E}{\mathrm{\mathbb{E}}}
\newcommand{\D}{\mathcal{D}}
\newtheorem{proposition}{Proposition}
\newtheorem{lemma}[proposition]{Lemma}
\newtheorem{remark}{Remark}
\definecolor{TUred}{RGB}{165,30,55}
\definecolor{TUgold}{RGB}{180,160,105}
\definecolor{TUblue}{RGB}{0,105,170}
\definecolor{TUpurple}{RGB}{175,110,150}
\newcommand{\FS}[1]{\todo[color=TUpurple!50]{\textbf{FS:} #1}}	%
\newcommand*{\ie}{i.e.\@\xspace}
\newcommand*{\eg}{e.g.\@\xspace}
\newcommand*{\cf}{c.f.\@\xspace}
\newcommand*{\wrt}{w.r.t.\@\xspace}
\newcommand*{\iid}{i.i.d.\@\xspace}
\newcommand*{\pd}{p.d.\@\xspace}
\newcommand{\cmark}{\ding{51}}
\newcommand{\xmark}{\ding{55}}
\definecolor{tue1}{rgb}{0.55294118, 0.17647059, 0.22352941}
\definecolor{tue2}{rgb}{0.21568627, 0.25490196, 0.29019608}
\definecolor{tue3}{rgb}{0.68235294, 0.62352941, 0.42745098}
\definecolor{tue4}{rgb}{0.        , 0.41176471, 0.66666667}
\definecolor{tue5}{rgb}{0.68627451, 0.70196078, 0.71764706}
\definecolor{tue6}{rgb}{0.49019608, 0.64705882, 0.29411765}
\definecolor{tue7}{rgb}{0.78431373, 0.31372549, 0.23529412}
\definecolor{tue8}{rgb}{0.68627451, 0.43137255, 0.58823529}
\definecolor{tue9}{rgb}{0.70588235, 0.62745098, 0.58823529}
\definecolor{tue10}{rgb}{0.56862745, 0.41176471, 0.2745098}
\definecolor{expand}{rgb}{0.68235294, 0.62352941, 0.42745098}  %
\definecolor{reduce}{rgb}{0.55294118, 0.17647059, 0.22352941}  %
\newcommand*{\transpose}{\bgroup\@ifstar{\mathpalette\@transpose{\mkern-3.5mu}\egroup}{\mathpalette\@transpose\relax\egroup}}
\newcommand*{\@transpose}[2]{\setbox0=\hbox{\m@th$#1#2\intercal$}\raise\dp0\box0}
\title{Kronecker-Factored Approximate Curvature for Modern Neural Network Architectures}
\author[1]{Runa Eschenhagen\thanks{Correspondence to: \texttt{re393@cam.ac.uk}.}$^{*,}$}
\author[2,3]{Alexander Immer}
\author[1]{Richard E. Turner}
\author[4,5]{\authorcr Frank Schneider}
\author[4,5]{Philipp Hennig}
\affil[1]{University of Cambridge}
\affil[2]{Department of Computer Science, ETH Zurich}
\affil[3]{Max Planck Institute for Intelligent Systems}
\affil[4]{University of T\"ubingen}
\affil[5]{T\"ubingen AI Center}
\begin{document}

\maketitle

\begin{abstract}
  The core components of many modern neural network architectures, such as transformers, convolutional, or graph neural networks, can be expressed as linear layers with \emph{weight-sharing}.
  Kronecker-Factored Approximate Curvature (K-FAC), a second-order optimisation method, has shown promise to speed up neural network training and thereby reduce computational costs.
  However, there is currently no framework to apply it to generic architectures, specifically ones with linear weight-sharing layers.
  In this work, we identify two different settings of linear weight-sharing layers which motivate two flavours of K-FAC -- \emph{expand} and \emph{reduce}.
  We show that they are exact for deep linear networks with weight-sharing in their respective setting. %
  Notably, K-FAC-reduce is generally faster than K-FAC-expand, which we leverage to speed up automatic hyperparameter selection via optimising the marginal likelihood for a Wide ResNet.
  Finally, we observe little difference between these two K-FAC variations when using them to train both a graph neural network and a vision transformer.
  However, both variations are able to reach a fixed validation metric target in $50$-$75$\% of the number of steps of a first-order reference run, which translates into a comparable improvement in wall-clock time. This highlights the potential of applying K-FAC to modern neural network architectures.
\end{abstract}

\setcounter{footnote}{0}  %

\section{Introduction}
\label{sec:intro}
%!TEX root=../neurips_2023.tex
One of the key driving forces behind the success of deep learning is arguably the development and scaling of novel deep neural network (DNN) architectures like transformers \citep{vaswani2017attention} or graph neural networks \citep{scarselli2009gnn,battaglia2018relational}.
%This can be explained by 1) more powerful inductive biases or more universally applicable models and 2) the observation that scaling the model while increasing data and compute leads to a proportional improvement in loss \citep{kaplan2020scaling}.
While the landscape of neural network architectures seems vast, their core building blocks like attention, graph, recurrent, and convolutional layers can be expressed as simple linear layers with \emph{weight-sharing}.
For example, the weight-sharing can happen over a sequence of tokens, \eg in language modelling, over image patches, or by tying weights in the forward pass.
%Weight sharing increases parameter efficiency, as fewer parameters need to be trained, enforces invariance to the position of an element along the weight-sharing dimension, and allows for inputs with a weight-sharing dimension of varying size.
The perspective of expressing the core neural network operations as linear operations has been formalised in a different context with the Tensor Programs framework \citep{yang2019TPI}, which develops a language to express arbitrary neural network computations as a composition of simple matrix multiplications and coordinate-wise nonlinearities.
One downside of relying on increasing the scale of the models to achieve better performance is the increased compute cost.
To decrease these costs, we can try to improve the efficiency of the optimisation algorithms, which motivates exploring second-order methods like Newton's method and natural gradient descent \citep{amari1998natural}, which uses the Fisher information matrix, or short, the Fisher.

Kronecker-Factored Approximate Curvature \citep[K-FAC]{heskes2000natural, martens2015optimizing} is an approximation to the Fisher, or for common loss functions equivalently, the generalised Gauss-Newton matrix (GGN).
It has first been derived in the context of optimisation for linear layers and later for convolutional \citep{grosse2016kronecker} and recurrent \citep{martens2018kroneckerfactored} layers.
In deep learning, using the Fisher/GGN instead of the Hessian has been the de-facto standard;
in turn, K-FAC has arguably been one of the most popular approximations of the Fisher/GGN, probably due to its relative efficiency as it approximates each layer's Fisher/GGN independently with a Kronecker product.
While K-FAC has also been used for transformers \citep{zhang2019which,pauloski2021kaisa,osawa2022pipefisher,grosse2023studying} and a graph neural network \citep{izadi2020optimization}, there is no theoretical framework for these cases and for applying K-FAC to new architectures.

\textbf{Contributions.}
We propose such a framework by leveraging the idea of linear layers with weight-sharing.
This concept reveals the additional structure in the Fisher/GGN due to the weight-sharing that has to be explicitly considered when applying K-FAC.
We identify two settings of such layers that motivate two different flavours of the K-FAC approximation: \emph{K-FAC-expand} and \emph{K-FAC-reduce}.
The former corresponds to the Kronecker Factors for Convolution (KFC) approximation in \citet{grosse2016kronecker}, to the simplest approximation for recurrent layers proposed in \citet{martens2018kroneckerfactored}, and has been used for transformers \citep{zhang2019which,pauloski2021kaisa,osawa2022pipefisher,grosse2023studying} -- though without discussion or motivation.
Notably, K-FAC-reduce is exact for certain settings, like deep linear networks with convolutions and average pooling, under the same conditions that K-FAC is exact for deep linear networks without weight-sharing, whereas the currently used K-FAC-expand is not.
In practice, both approximations can be used in each setting and K-FAC-reduce generally has a lower computational and memory complexity than K-FAC-expand.
We empirically verify this speed difference with a Wide ResNet on CIFAR-10.
Moreover, we show that the two K-FAC variations applied to a graph neural network and vision transformer can reach a fixed validation metric target in $50$-$75$\% of the steps of a first-order reference method, which translates into an almost equivalent decrease in wall-clock time.
Although this does not demonstrate that K-FAC is a superior training algorithm, it indicates the potential of extending K-FAC to modern deep learning architectures.
In general, K-FAC can be used as a drop-in Hessian approximation, \eg, in Laplace approximations for Bayesian deep learning \citep{mackay1992bayesian,ritter2018laplace,daxberger2021laplace} or in natural gradient variational inference \citep{khan2018vogn,zhang2018noisy}.
To demonstrate this, we show that K-FAC-reduce can speed up automatic weight decay selection via approximate marginal likelihood optimisation, compared to the previously used K-FAC-expand.

\section{Background}
\label{sec:background}
%!TEX root=../neurips_2023.tex
We consider the supervised learning setup with a dataset $\D$ of $N$ \iid samples $\{\vx_n, \vy_n\}_{n=1}^N$, with $\vx_n\!\in\!\R^D$ and $\vy_n\!\in\!\R^C$, a neural net $f_\vtheta\!:\!\R^D\!\rightarrow\!\R^C$, parameterised with $\vtheta\!\in\!\R^P$, and a loss function $\ell\!:\!\R^C\!\times\!\R^C\!\rightarrow\!\R$, which is often equivalent to a negative log likelihood, \ie $\ell(\vy, f_\vtheta(\vx))\!= - \log p(\vy | f_\vtheta(\vx))$.
The DNN is usually trained with a (stochastic) gradient-based iterative update rule, in its simplest form $\vtheta_{t+1} = \vtheta_t - \alpha \nabla_{\vtheta_t} \ell(\vy, f_{\vtheta_t}(\vx))$, where $t$ indicates the iteration and $\alpha$ is the learning rate.
If we assume a local quadratic approximation of the loss around the current parameter iterate $\vtheta_t$, we get a \emph{preconditioned} gradient update step, \ie
\begin{equation}
\label{eq:PGD}
    \vtheta_{t+1} = \vtheta_t - \alpha \, \mC_t^{-1} \nabla_{\vtheta_t} \ell(\vy, f_{\vtheta_t}(\vx)),
\end{equation}
where $\mC_t\!\in\!\R^{P \times P}$ is the symmetric positive definite (\pd) preconditioner.
Setting $\mC_t$ to the Hessian of the loss \wrt the model parameters $\mH_t := \nabla_{\vtheta_t}^2 \ell(\vy, f_{\vtheta_t}(\vx))$ yields a simplified version of the classic Newton's method; however, since $\mH_t$ is expensive to compute, store, and not guaranteed to be \pd, it is not commonly used in deep learning.

\subsection{Second-order optimisation in deep learning}
\label{subsec:soo}

The term ``second-order'' is used ambiguously in the literature:
it can refer to using second-order derivatives like the Hessian of the loss \wrt the parameters of the model, or approximations thereof, like the GGN, which however only contains second-order derivatives of the loss \wrt the model outputs and not the parameters.
It can also refer to using some variation of the second moment of the (average) mini-batch gradient, which blurs the line between commonly used methods that are usually considered ``first-order'' methods like Adam \citep{kingma2014adam}.

There have been numerous works on ``second-order'' methods in deep learning.
For example, \citet{martens2010deep} proposes to use Hessian-free optimisation, \citet{gupta2018shampoo} introduce an architecture-agnostic structured version of AdaGrad \citep{duchi2011adaptive} called Shampoo, \citet{goldfarb2020practical} develop K-BFGS, a Kronecker-factored stochastic quasi-Newton method, and \citet{ren2021tensor} introduce Tensor Normal Training, which is similar to Shampoo, but approximates the Fisher;
just to name a few among many more methods \citep{yao2021adahessian,yang2022sketch,yang2023efficient}.
While these methods have been shown to require fewer steps than commonly used first-order methods to reach similar performance in many settings, the mechanism behind their benefits is not fully understood.
It could be orthogonal to unique properties of the true GGN or Fisher but might be more related to being able to adapt to the stochastic gradient noise \citep{kunstner2019limitations} or to the fact that K-FAC with a specific but commonly used form of damping can be viewed as an approximation to gradient descent on neurons instead of weights \citep{benzing2022gradient}.
In any case, generalising K-FAC to modern architectures is a promising direction because it is used in many training algorithms that can potentially speed up training and train a wider range of architectures.
Moreover, it can be used to study the mechanisms behind the benefits of second-order methods in deep learning.

\subsection{Linear weight-sharing layers}
\label{subsec:weight_sharing}
%\PH{Important question: How commonly known is this notion of \emph{weight-sharing layers}? Because I hadn't heard of it before this paper. The text sounds, throughout, like this is a known concept. It might add value to the paper to explicitly say if it's a structure not yet used elsewhere. If it's well-known, can we cite someone \citep[][???]{yang2019TPI})}
Here, we present three examples of how the core building blocks of modern DNN architectures can be expressed as linear layers with weight-sharing.
%While modern architectures rely on many components, such as skip connections, positional embeddings, and normalisation layers, we focus on the core operations that are most relevant to K-FAC here.
DNNs have a layered structure, \ie they can be written as $f_\vtheta = f_{\vtheta_L}\!\circ\!\dots\!\circ\!f_{\vtheta_\ell}\!\circ\!\dots\!\circ\!f_{\vtheta_1}$, with $\vtheta = \concat\big(\vtheta_1, \dots,\vtheta_\ell, \dots, \vtheta_L \big)$ and $L$ the number of layers;
$\concat(\cdot, \dots, \cdot)$ concatenates vector inputs to a larger vector.
For a linear layer, we have $f_{\vtheta_\ell}(\vx) = \phi(\mW_\ell \vx + \vb_\ell),$ where $\vx\!\in\!\R^{P_{\ell, \mathrm{in}}},$ $\mW_\ell\!\in\!\R^{P_{\ell, \mathrm{out}} \times P_{\ell, \mathrm{in}}}$, $\vb\!\in\!\R^{P_{\ell, \mathrm{out}}}$, $\vtheta_\ell = \concat\big( \vec(\mW_\ell), \vb_\ell \big)\!\in\!\R^{P_\ell},$ and $P_\ell = P_{\ell, \mathrm{out}} P_{\ell, \mathrm{in}}\!+\!P_{\ell, \mathrm{out}}$;
$\vec(\cdot)$ vectorises a matrix by concatenating its column vectors and $\phi$ is an element-wise nonlinearity.
For simplicity, we subsume the bias into the weights.
We are interested in linear layers with weights shared across an additional input dimension of size $R$, \ie with inputs $\mX\!\in\!\R^{R \times D}$.\footnote{Our framework also applies to weight-sharing where the weights are used multiple times during the forward pass; however, we focus the presentation on an input with an explicit weight-sharing dimension.}
The linear layer is applied to this input as $\mX \mW^\transpose$ and the weight matrix $\mW$ is shared across the additional first dimension, \ie every element in $\mW$ contributes to each row of the output.
%similar to how it is typically applied over a mini-batch of data.\PH{This sentence is the central definition of weight-sharing, but it doesn't say clearly enough what the point is. My best way of putting it is: In a standard linear layer, every weight $W_{ij}$ contributes to only one output $i$, because $[Wx]_i = \sum_j W_{ij}x_j$. But in a layer of form $[XW^\transpose]_{ik} = \sum_j X_{ij}W_{kj}$, each weight $W_{kj}$ contributes to \emph{all} outputs $i$, and this is reflected in the curvature.}
%We call linear layers applied to such inputs \emph{linear layers with weight-sharing} or \emph{linear weight-sharing layers}.
%This type of linear layer appears in many NN architectures, such as transformers, convolutional, and graph neural networks.

\textbf{Example 1: Attention.}
The transformer architecture is popular in vision \citep{dosovitskiy2021vit}, language \citep{brown2020gpt3}, and spatio-temporal modelling \citep{bi2022panguweather}.
The inputs and outputs of the attention operation \citep{bahdanau2015neural} in transformers are exactly shaped as described above, where $R$ is the sequence length, \eg the number of tokens in language data or the number of image patches in vision.
For the $i$-th row of the output matrix $\mO$ and the $r$-th row of the inputs $\mX$, the attention operation is generally defined as
\begin{equation}
    \vo_i = \sum_{r=1}^R A_{i,r} \vx_r,
\end{equation}
where $\mA\!\in\!\R^{R \times R}$ is the \emph{attention matrix}, which is normalised such that $\sum_{r=1}^R A_{i,r} = 1$;
alternatively, we can write it using matrix operations as $\mO = \mA \mX$.
Intuitively, the outputs are a weighted sum of the inputs, where each weighting factor indicates the importance of the $r$-th sequence element to the element with index $i$.
The most common choice for $\mA$ is a variation of so-called self-attention, defined as $\mA = \softmax_{\mathrm{row}}(\mX \mW_Q^\transpose \mW_K \mX^\transpose)$, where $\mW_Q\!\in\!\R^{P_\mathrm{out} \times D}$ and $\mW_K\!\in\!\R^{P_\mathrm{out} \times D}$ are weight matrices and $\softmax_{\mathrm{row}}(\cdot)$ applies the softmax function $\softmax(\vx)_i := \exp(x_i) / \sum_{j=1}^J \exp(x_j)$ for an input $\vx\!\in\!\R^J$ to each row of the input matrix.
This corresponds to a dot-product similarity measure of two independent linear projections of the input, allowing for asymmetric relationships between sequence elements.
The two weight matrices $\mW_Q, \mW_K$, and any linear projection of the attention output are applied to all $R$ sequence elements -- they are linear weight-sharing layers.

\textbf{Example 2: Convolution.}
A typical input to a $2$d convolution is an image $\mX^I\!\in\!\R^{C_\mathrm{in} \times H \times W}$, where $C_\mathrm{in}$ is the number of input channels, $H$ is the height, and $W$ the width of the image.
We choose a kernel of size $K\!\times\!K$, such that we have a rank-$4$ weight tensor of shape $C_\mathrm{out}\!\times\!C_\mathrm{in}\!\times\!K\!\times\!K$, where $C_\mathrm{out}$ is the number of output channels.
For simplicity, following \citet{grosse2016kronecker}, we assume that the convolution is performed with a stride of $1$ and that the padding is chosen such that the height and width of the inputs are maintained, \ie the output image will be $\mO^I\!\in\!\R^{C_\mathrm{out} \times H \times W}$.
The elements of the output tensor of the convolution are then defined as
\begin{equation}
    o_{d, i, j}^I = \sum_{c=1}^{C_\mathrm{in}} \sum_{k=1}^{K} \sum_{k'=1}^{K} X_{c, i+k-1, j+k'-1}^I W_{d, c, k, k'};
\end{equation}
the input elements $X_{c, i+k-1, j+k'-1}$ are defined as $0$ outside of the image's boundaries.
Now, we can \emph{unfold} the input tensor to a matrix $\mX\!\in\!\R^{HW \times C_\mathrm{in} K^2}$, where we have extracted and vectorised the $K\!\times\!K$ sized patches for each of the $H\!\cdot\!W$ spatial locations.
The weight tensor can be reshaped to the matrix $\mW \in \R^{C_\mathrm{out} \times C_\mathrm{in} K^2}$.
Using these quantities, we can write the convolution as $\mO = \mX \mW^\transpose\!\in\!\R^{HW \times C_\mathrm{out}}$.
Again, we can recognise the canonical form of a linear weight-sharing layer -- here, with $R = HW$, \ie the weights are shared across all spatial locations.

\textbf{Example 3: Graph neural network layer.}
We follow \citet{battaglia2018relational} since their formulation encompasses many graph neural network (GNN) layers.
For the full example and notation, please refer to \Cref{app:gnn}.
Once again, the core operations in these kinds of models can be expressed as linear weight-sharing layers.
Here, the weights are shared across all $R=N_V$ nodes or $R=N_E$ edges of a graph, depending on whether node or edge features are being updated.

\subsection{The generalised Gauss--Newton and the Fisher matrix}
\label{subsec:ggn}
To avoid computing the Hessian, approximations like the generalised Gauss--Newton matrix (GGN) are usually used in deep learning \citep{schraudolph2002fast,martens2010deep,botev2017practical}.
Defining $\mLambda(f_\vtheta(\vx_n))\!:=\!\nabla_{f_\vtheta}^2 \ell(\vy_n, f_\vtheta(\vx_n))\!\in\!\R^{C \times C}$, we have
\begin{equation}
    \label{eq:GGN}
    \GGN(\vtheta) = \sum_{n=1}^N \mJ_\vtheta f_\vtheta(\vx_n)^\transpose \mLambda(f_\vtheta(\vx_n)) \mJ_\vtheta f_\vtheta(\vx_n),
\end{equation}
which is equivalent to the Hessian when $f_\vtheta$ is a linear model or the residual $g^{-1}(f_\vtheta(\vx_n)) - \vy_n$ is zero for all data points, where $g^{-1}$ is the inverse link function, \eg the softmax function for classification.
For all likelihoods of the exponential family with natural parameterisation the GGN coincides with the Fisher information matrix \citep{wang2010fisher,martens2014new}.
This is, for example, the case for the common cross-entropy loss corresponding to a categorical likelihood and the mean-square error loss corresponding to a Gaussian likelihood.
% The Fisher information matrix is defined as\footnote{More generally, it is defined with an expectation over $\vx$ as well.}
% %
% \begin{equation}
% 	\label{eq:FIM}
% 	\begin{split}
% 		\mF(\vtheta) %&= -\sum_{n=1}^N \E_{y \sim p(y | f_\vtheta(\vx_n))}[\nabla_\vtheta^2 \log p(y|f_\vtheta(\vx_n))] \\
% 		&= \sum_{n=1}^N \E_{\vy \sim p(\vy | f_\vtheta(\vx_n))}[\nabla_\vtheta \log  p(\vy|f_\vtheta(\vx_n)) (\nabla_\vtheta \log p(\vy|f_\vtheta(\vx_n)))^\transpose].
% 	\end{split}
% \end{equation}
% %
% Notably, the labels $\vy$ are samples from the predictive distribution of the model and are not the empirical labels from the data.
% Replacing $\vy$ with $\vy_n$ leads to the \emph{empirical} Fisher (EF), which is simply the uncentered covariance of the empirical gradient.
% While it is commonly used as a replacement for the Fisher \citep{graves2011vi,kingma2014adam,chaudhari2017entropy}, they are generally not the same \citep{kunstner2019limitations}.
Setting $\mC_t$ in \Cref{eq:PGD} to the Fisher leads to natural gradient descent \citep{amari1998natural}.
We will only state our derivations for the GGN explicitly, but they follow analogously for the Fisher.

\subsection{Kronecker-Factored Approximate Curvature}
The GGN is a semi-\pd approximation to the Hessian, of size $P\!\times\!P$, which is still prohibitively large for DNNs.
K-FAC was proposed as an efficient approximation to a neural network's Fisher \citep{heskes2000natural,martens2015optimizing} and GGN \citep{botev2017practical} matrix.\footnote{In the literature, the name ``K-FAC'' is used to refer to the approximation of the Fisher/GNN and to a specific algorithm using this approximation \citep{martens2015optimizing}; here, we use it to refer to just the approximation.}
First, we only approximate the GGN for each layer, ignoring cross-layer interactions, which results in a block-diagonal matrix.
We define $\mJ_{\vtheta_\ell}(\vx_n)\!:=\!\mJ_{\vtheta_\ell} f_\vtheta(\vx_n)\!=\!\mJ_{\vs_{\ell, n}} f_\vtheta(\vx_n) \mJ_{\vtheta_\ell} \vs_{\ell, n}\!\in\!\R^{C \times P_\ell}$ as the Jacobian of the model outputs \wrt the parameters of the $\ell$-th layer.
Now we can write $\vs_{\ell, n}\!=\!\mW_\ell \va_{\ell, n}\!=\!(\va_{\ell, n}^\transpose\!\otimes \mathbf{I}_{P_{\ell, \mathrm{out}}})\,\vec(\mW_\ell)$ and with this we have $\mJ_{\vtheta_\ell} \vs_{\ell, n}\!=\!\va_{\ell, n}^\transpose\!\otimes \mathbf{I}_{P_{\ell, \mathrm{out}}}.$
Additionally, by defining $\vb_{\ell, n}\!:=\!\mJ_{\vs_{\ell, n}}f_\vtheta(\vx_n)^\transpose\!\in\!\R^{P_{\ell, \mathrm{out}} \times C}$ as the transposed Jacobian of the model outputs \wrt the pre-activations of the $\ell$-th layer, we note that $\mJ_{\vtheta_\ell}(\vx_n)^\transpose\!=\!(\va_{\ell, n}^\transpose\!\otimes \mathbf{I}_{P_{\ell, \mathrm{out}}})^\transpose \vb_{\ell, n}\!=\!\va_{\ell, n}\!\otimes \vb_{\ell, n}$.
K-FAC approximates the GGN of the $\ell$-th layer as
\begin{equation}
\label{eq:kfac}
  \GGN(\vtheta_\ell) \approx \underbrace{\left[ \frac{1}{N} \sum_{n=1}^N \va_{\ell, n} \va_{\ell, n}^\transpose \right]}_{=: \mA_\ell} \otimes \underbrace{\left[ \sum_{n=1}^N \vb_{\ell, n} \mLambda(f_\vtheta(\vx_n)) \vb_{\ell, n}^\transpose \right]}_{=: \mB_\ell},
\end{equation}
where we have replaced the (transposed) Jacobians in the definition of the GGN in \Cref{eq:GGN} by $\va_{\ell, n}\!\otimes \vb_{\ell, n}$ and approximated the sum of Kronecker products with a Kronecker product of sums.\footnote{Note that the Kronecker product $\mA \otimes \mB$ is in reverse order compared to in \Cref{fig:visual_expand}, because we concatenate the columns in the derivation and the rows in the implementation when flattening a matrix with $\vec(\cdot)$.}

\section{K-FAC for linear weight-sharing layers}
\label{sec:kfac}
%!TEX root=../neurips_2023.tex
In this section, we propose to distinguish between two different linear weight-sharing settings when applying K-FAC.
The two settings motivate two corresponding approximations, \emph{K-FAC-expand} and \emph{K-FAC-reduce}.
However, \emph{both approximations can be applied in each setting}, see \Cref{lst:code} for an illustration with code.
More details on the derivations and the proofs for \Cref{prop:expand_deep_linear} and \Cref{prop:reduce_deep_linear} can be found in \Cref{app:theory}.

\subsection{The expand and reduce settings}
\label{subsec:base_cases}

Within a network, linear layers with weight-sharing can be classified based on the point where the weight-sharing dimension is aggregated.
Specifically, there are three possible aggregation points, which will define the setting for the $\ell$-th layer:
\begin{enumerate}[label=(\roman*)]
  \item \emph{Before} the $\ell$-th layer, \ie
  $\mA_{\ell, n}\!\in\!\R^{R \times P_{\ell, \mathrm{in}}}$ is reduced to
  $\bm{{\tilde{a}}}_{\ell, n}\!\in\!\R^{P_{\ell, \mathrm{in}}}$ before being multiplied
  with the weight matrix of the layer $\mW_\ell$. $\rightarrow$ We are in the setting of a
  regular linear layer and no additional considerations are necessary when using K-FAC.
  \item \emph{After the per-example loss}, \ie there will be $N\!\cdot\!R$ labels and outputs of the model.
  The per-example loss is applied to each of the $N\!\cdot\!R$ output-label pairs.
  $\rightarrow$ This is the \textcolor{expand}{\emph{expand}} setting, as the
  loss for each of the $N$ data points is expanded with $R$ terms.
  \item \emph{In between} the pre-activation of the $\ell$-th layer
  $\mS_{\ell, n}\!\in\!\R^{R \times P_{\ell, \mathrm{out}}}$ and the model output
  $f_\vtheta(\vx_n)$, \ie before the final aggregation over the per-example losses.
  $\rightarrow$ This is the \textcolor{reduce}{\emph{reduce}} setting, as all
  weight-sharing dimensions have been reduced during the forward pass.
\end{enumerate}
%\AI{These points are something one could consider visualizing. Every time I read this, I have to think about it for a second or re-read it so some guidance could help communicate the cases.}
%
If we assume a single weight-sharing dimension which will be reduced once, the number of loss terms determines which setting applies to all linear weight-sharing layers within the model.
So for our purposes, \emph{we can identify the setting we are in simply by looking at the form of the loss}.
\begin{figure}[t]
    \centering
    \includegraphics[width=\textwidth]{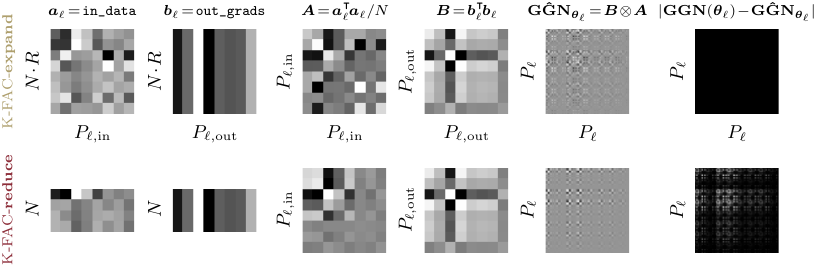}
    \caption{\textbf{Visualisation of \textcolor{expand}{K-FAC-expand} and
    \textcolor{reduce}{K-FAC-reduce} in the \textcolor{expand}{expand setting}.}
    The shown quantities are for a single layer within a deep linear network.
    We have $N\!=\!4, R\!=\!2, P_{\ell, \mathrm{in}}\!=\!8, P_{\ell, \mathrm{out}}\!=\!8,$ and
    $P_\ell\!=\!P_{\ell, \mathrm{in}}\!\cdot\!P_{\ell, \mathrm{out}}\!=\!64$.
    As we have seen in \Cref{subsec:expand}, K-FAC-expand is exact for the
    expand case in this setting and K-FAC-reduce is not.
    For better visibility, the color scale is not the same for all quantities, except for the approximation error (\textit{right}) where black represents zero.}
    \label{fig:visual_expand}
\end{figure}

\subsection{K-FAC-expand}
\label{subsec:expand}

The \textcolor{expand}{expand} setting can be characterised by a loss with $N\!\cdot\!R$ terms, similar to $N\!\cdot\!R$ \iid examples, $\mathcal{L}_{\mathrm{expand}}(f_\vtheta, \mathcal{D})\!:= - \sum_{n=1}^N \sum_{r=1}^R \log p(\vy_{n,r} | f_\vtheta(\vx_n)_r)$, where $f_\vtheta(\vx_n)_r$ is the $r$-th row of the model output $f_\vtheta(\vx_n)\!\in\!\R^{R \times C}$ and $\vy_{n,r}$ is the $r$-th row of the label $\mY_n\!\in\!\R^{R \times C}$.
A typical example of this type of loss function is language translation, where $N$ is the dataset size and $R$ is the sequence length.

We can express the Jacobian of the $r$-th row of the model output $f_\vtheta(\vx_n)\!\in\!\R^{R \times C}$ \wrt $\vtheta_\ell$ as $\mJ_{\vtheta_\ell}(\vx_n)_r\!=\!\sum_{m=1}^R \mJ_{\vs_{\ell, n, m}} f_\vtheta(\vx_n)_r \mJ_{\vtheta_\ell} \vs_{\ell, n, m}$.
Since the weights $\vtheta_\ell$ are shared across the weight-sharing dimension of size $R$, we can write the $r$-th row of $\mS_{\ell, n}$ as $\vs_{\ell, n, r}\!=\!\mW_\ell \va_{\ell, n, r}$ and we have $\mJ_{\vtheta_\ell} \vs_{\ell, n, r}\!=\!\va_{\ell, n, r}^\transpose\!\otimes\!\mathbf{I}_{P_{\ell, \mathrm{out}}}$, as for a regular linear layer.
We denote $\vb_{\ell, n, r, m}\!:=\!\mJ_{\vs_{\ell, n, m}} f_\vtheta(\vx_n)_r^\transpose$.
Hence, we have $\mJ_{\vtheta_\ell}(\vx_n)_r^\transpose\!=\!\sum_{m=1}^R \va_{\ell, n, m}\!\otimes\!\vb_{\ell, n, r, m}$.
Deriving the approximation the same way as we would with $N$ examples is not possible, since we cannot directly write each of the $N\!\cdot\!R$ loss terms as a Kronecker product without any approximation.
The Jacobians $\mJ_{\vtheta_\ell}(\vx_n)_r$ could be approximated with a Kronecker product of sums, but this requires access to $\sum_{m=1}^R \vb_{\ell, n, r, m}$ and would not be exact in the simple settings we consider later.
In contrast, what can be implemented in practice without additional backward passes and what has been used for convolutional neural networks \citep{grosse2016kronecker} and language transformers \citep{zhang2019which,pauloski2021kaisa,osawa2022pipefisher,grosse2023studying} is
\begin{equation}
\begin{split}
    \GGN(\vtheta_\ell) &= \sum_{n=1}^N \sum_{r=1}^R \left( \sum_{m=1}^R \va_{\ell, n, m} \otimes \vb_{\ell, n, r, m} \right) \mLambda(f_\vtheta(\vx_n)_r) \left( \sum_{m'=1}^R \va_{\ell, n, m'}^\transpose \otimes \vb_{\ell, n, r, m'}^\transpose \right) \\
    &\approx \sum_{n=1}^N \sum_{r=1}^R \sum_{m=1}^R \left( \va_{\ell, n, m} \otimes \vb_{\ell, n, r, m} \right) \mLambda(f_\vtheta(\vx_n)_r) \left( \va_{\ell, n, m}^\transpose \otimes \vb_{\ell, n, r, m}^\transpose \right),
\end{split}
\end{equation}
where all terms with $m\!\neq\!m'$ are ignored.\footnote{Although the work on transformers does not scale by $1/R$ in \Cref{eq:kfac_expand}, see \Cref{app:expand} for details.}
Consequently, we can apply the regular K-FAC approximation over $N\!\cdot\!R$ terms instead of the usual $N$ terms, resulting in what we call the \emph{K-FAC-expand} approximation:
\begin{tcolorbox}[colframe=expand, title={K-FAC-expand},bottom=0mm,top=0mm,middle=0mm]
\begin{equation}
\label{eq:kfac_expand}
  \bm{{\hat{\mathrm{GGN}}}}_{\vtheta_\ell}^{\mathrm{expand}} := \underbrace{\left[ \frac{1}{NR} \sum_{n=1}^N \sum_{m=1}^R \va_{\ell, n, m} \va_{\ell, n, m}^\transpose \right]}_{= \mA_\ell} \otimes  \underbrace{\left[\sum_{n=1}^N \sum_{m=1}^R \sum_{r=1}^R \vb_{\ell, n, r, m} \mLambda(f_\vtheta(\vx_n)_r) \vb_{\ell, n, r, m}^\transpose \right]}_{= \mB_\ell}
\end{equation}
\end{tcolorbox}
In the context of convolutions, this approximation has been derived by explicitly stating the assumptions on the activations and pre-activation derivatives \citep{grosse2016kronecker}.

For deep linear networks without any weight-sharing layers, K-FAC is known to be exact assuming a Gaussian likelihood \citep{bernacchia2018exact}.
While this holds for the full GGN/Fisher, we only focus on the block-diagonal case here.
To motivate K-FAC-expand, we want to show that the same also holds for deep linear networks with weight-sharing in the expand setting.
However, it does not even hold for simplistic transformer models since the dot-product attention mechanism in transformers directly correlates elements across the weight-sharing dimension; we discuss this in more detail in \Cref{app:theory}.
A deep linear network is defined as a model of the form
\begin{equation}
\label{eq:deep_linear}
    f_\vtheta(\vx) = \mW_L \dots \mW_\ell \dots \mW_1 \vx = \mW \vx,
\end{equation}
where $\vx\!\in\!\R^D$ and $\mW_L\!\in\!\R^{C \times P_{L, \mathrm{in}}}, \mW_\ell\!\in\!\R^{P_{\ell, \mathrm{out}} \times P_{\ell, \mathrm{in}}}$ (with $P_{\ell, \mathrm{in}}\!=\!P_{\ell-1, \mathrm{out}})$, and $\mW_1\!\in\!\R^{P_{1, \mathrm{out}} \times D}$.
Decomposing a single weight matrix $\mW$ into $L$ separate ones is a common framework for theoretical analysis since it creates nonlinear training dynamics for gradient-based training algorithms, while still having analytical solutions \citep{saxe2014exact,bernacchia2018exact}.
For an input $\mX\!\in\!\R^{R \times D}$ to this type of network, we can show:
\begin{restatable}[\textbf{Exactness of K-FAC-expand for deep linear network in the expand setting}]{proposition}{expanddeeplinear}
\label{prop:expand_deep_linear}
    For layer $\ell$ of a deep linear network defined as in \Cref{eq:deep_linear}
    and a Gaussian likelihood with \pd covariance matrix
    $\mSigma\!\in\!\R^{C \times C}$, K-FAC-expand is exact in the expand setting.
\end{restatable}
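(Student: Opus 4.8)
The plan is to prove the statement by showing directly that the exact block-diagonal GGN of layer $\ell$ in the expand setting equals $\bm{{\hat{\mathrm{GGN}}}}_{\vtheta_\ell}^{\mathrm{expand}}$ from \Cref{eq:kfac_expand}, exploiting two simplifications the hypotheses hand us. The first is immediate: for a Gaussian likelihood with covariance $\mSigma$ the per-example loss is quadratic in the output, so the output Hessian is constant, $\mLambda(f_\vtheta(\vx_n)_r) = \mSigma^{-1}$ for every $n$ and $r$. This strips all data dependence from the middle factor of the GGN.

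The crucial structural step -- and the part I expect to carry the whole argument -- is to pin down the pre-activation Jacobians $\vb_{\ell, n, r, m}$ for a deep linear network. In contrast to attention, a deep linear weight-sharing layer applies the same weights to each element of the weight-sharing dimension \emph{without mixing them}, so output row $r$ depends only on the pre-activation $\vs_{\ell, n, r}$ of the same element. I would make this precise by writing $f_\vtheta(\vx_n)_r = \mW_{>\ell}\, \vs_{\ell, n, r}$ with $\mW_{>\ell} := \mW_L \cdots \mW_{\ell+1}$, which immediately yields $\vb_{\ell, n, r, m} = \vzero$ for $m \neq r$ and $\vb_{\ell, n, r, r} = \mW_{>\ell}^\transpose$, a quantity independent of both $n$ and $r$. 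These are exactly the two properties that fail for dot-product attention, and they are what make the approximation exact here.

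Substituting both facts into the exact GGN collapses the inner double sum over $m, m'$ to the single diagonal term $m = m' = r$, leaving $\GGN(\vtheta_\ell) = \sum_{n,r} (\va_{\ell,n,r} \otimes \mW_{>\ell}^\transpose)\, \mSigma^{-1}\, (\va_{\ell,n,r}^\transpose \otimes \mW_{>\ell})$. I would then apply the mixed-product property of the Kronecker product twice -- writing $\mSigma^{-1} = 1 \otimes \mSigma^{-1}$ -- to recast each summand as $(\va_{\ell,n,r} \va_{\ell,n,r}^\transpose) \otimes (\mW_{>\ell}^\transpose \mSigma^{-1} \mW_{>\ell})$. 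Since the right factor no longer depends on $n$ or $r$, bilinearity of $\otimes$ lets me pull it out of the sum, giving $\GGN(\vtheta_\ell) = \big[\sum_{n,r} \va_{\ell,n,r} \va_{\ell,n,r}^\transpose\big] \otimes \big[\mW_{>\ell}^\transpose \mSigma^{-1} \mW_{>\ell}\big]$.

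Finally I would evaluate $\bm{{\hat{\mathrm{GGN}}}}_{\vtheta_\ell}^{\mathrm{expand}}$ under the same two facts: the factor $\mB_\ell$ reduces to $\sum_{n,r} \mW_{>\ell}^\transpose \mSigma^{-1} \mW_{>\ell} = NR\, \mW_{>\ell}^\transpose \mSigma^{-1} \mW_{>\ell}$, while $\mA_\ell$ carries the prefactor $1/(NR)$. The two scalars cancel across the Kronecker product, so $\mA_\ell \otimes \mB_\ell$ becomes identical to the expression just derived for $\GGN(\vtheta_\ell)$ (after relabelling the dummy index $m$ as $r$), which establishes exactness. The only genuine obstacle is the structural claim about $\vb_{\ell, n, r, m}$; once the off-diagonal terms are shown to vanish and the diagonal term to be data-independent, the remainder is routine Kronecker-product bookkeeping.
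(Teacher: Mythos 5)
Your proposal is correct and takes essentially the same approach as the paper: the structural facts you isolate---$\mLambda(f_\vtheta(\vx_n)_r)=\mSigma^{-1}$, $\vb_{\ell,n,r,m}=\vzero$ for $m\neq r$, and $\vb_{\ell,n,r,r}=\mW_{>\ell}^\transpose$ (the paper's $\mW_\ell^{a^\transpose}$) being independent of $n$ and $r$---are precisely the hypotheses of the paper's \Cref{lem:cond_kfac_expand}, which its proof of \Cref{prop:expand_deep_linear} verifies and then invokes. The only difference is organisational: you inline the Kronecker-product bookkeeping (collapsing the double sum to the diagonal and cancelling the $NR$ scalars) that the paper encapsulates in that sufficient-condition lemma.
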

\begin{figure}[t]
    \centering
    \includegraphics[width=\textwidth]{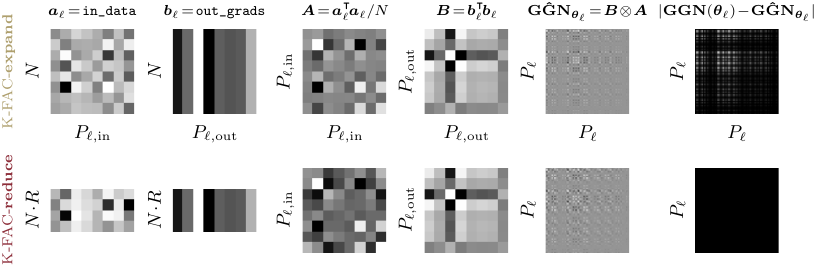}
    \caption{\textbf{Visualisation of \textcolor{expand}{K-FAC-expand} and
    \textcolor{reduce}{K-FAC-reduce} in the \textcolor{reduce}{reduce setting}.}
    This is similar to \Cref{fig:visual_expand}, but for the reduce setting, where K-FAC-reduce is exact and K-FAC-expand is not (\Cref{subsec:reduce}).}
    \label{fig:visual_reduce}
\end{figure}

\subsection{K-FAC-reduce}
\label{subsec:reduce}

The \textcolor{reduce}{reduce} setting is characterised by a loss with just $N$ loss terms, \ie $\mathcal{L}_{\mathrm{reduce}}(f_\vtheta, \mathcal{D})\!:= - \sum_{n=1}^N \log p(\vy_n | f_\vtheta(\vx_n))$, and thus the weight-sharing dimension must have been reduced somewhere in the forward pass of the neural network $f_\vtheta.$
A typical instance where this type of loss is used in a model with linear weight-sharing layers is image classification with a convolutional neural network or vision transformer.

Since $\mA_{\ell, n}\!\in\!\R^{R \times P_{\ell, \mathrm{in}}}$ is now a matrix, we have $\mS_{\ell, n}\!=\!\mA_{\ell, n} \mW_\ell^\transpose \!\in\!\R^{R \times P_{\ell, \mathrm{out}}}.$
Hence, $\mJ_{\vtheta_\ell} \mS_{\ell, n}$ and $\mJ_{\mS_{\ell, n}} f_\vtheta(\vx_n)$ are now both multi-dimensional arrays.
Luckily, we can avoid dealing with this directly by writing $\mJ_{\vtheta_\ell} f_\vtheta(\vx_n)\!=\!\sum_{r=1}^R \mJ_{\vs_{\ell, n, r}} f_\vtheta(\vx_n) \mJ_{\vtheta_\ell} \vs_{\ell, n, r}$, where $\vs_{\ell, n, r}\!\in\!\R^{P_{\ell, \mathrm{out}}}$ is the $r$-th row of
$\mS_{\ell, n}$ and $\vs_{\ell, n, r}\!=\!\mW_\ell \va_{\ell, n, r}$.
Using this equivalence, we can approximate the (transposed) Jacobians in the GGN for layer $\ell$ as
\begin{equation}
    \mJ_{\vtheta_\ell} f_\vtheta(\vx_n)^\transpose = \sum_{r=1}^R \va_{\ell, n, r} \otimes \vb_{\ell, n, r} \approx \frac{1}{R} \sum_{r=1}^R \va_{\ell, n, r} \otimes \sum_{r=1}^R \vb_{\ell, n, r}. % =: \bm{{\hat{a}}}_{\ell, n} \otimes \bm{{\hat{b}}}_{\ell, n}$.   
\end{equation}
Here, we have approximated the sum of Kronecker products with a Kronecker product of sums over $R$ terms of each of the $N$ per-input Jacobians.
This approximation has been proposed in \citet{tang2021skfac} to improve the efficiency of their proposed K-FAC variation for convolutions and in a different context for invariance learning with deep neural networks via differentiable Laplace approximations \citep{immer2022invariance}.
We can apply the same approximation as usual to the sum over the $N$ data points and call the resulting final approximation \emph{K-FAC-reduce} and can write it explicitly as
\begin{tcolorbox}[colframe=reduce, title={K-FAC-reduce},bottom=0mm,top=0mm,middle=0mm,left=0mm,right=0mm]
\begin{equation}
\label{eq:kfac_reduce}
\begin{split}
    &\bm{{\hat{\mathrm{GGN}}}}_{\vtheta_\ell}^{\mathrm{reduce}}\!:=\!\\
    &\underbrace{\!\left[\!\frac{1}{N\!R^2}\!\sum_{n=1}^N\!\left(\sum_{r=1}^R \va_{\ell, n, r} \right)\! \left(\sum_{r=1}^R \va_{\ell, n, r}^\transpose\!\right) \right]}_{= \bm{{\hat{A}}}_\ell}\otimes \underbrace{\left[\sum_{n=1}^N\!\left(\sum_{r=1}^R \vb_{\ell, n, r}\!\right) \!\mLambda(f_\vtheta(\mX_n))\!\left( \sum_{r=1}^R \vb_{\ell, n, r}^\transpose \right) \right]}_{= \bm{{\hat{B}}}_\ell}.
\end{split}
\end{equation}
\end{tcolorbox}
To ensure that K-FAC-reduce is exact in the simple setting of \Cref{prop:expand_deep_linear}, we have to choose an appropriate aggregation function $z: \R^{R \times P_{\ell, \mathrm{out}}} \rightarrow \R^{P_{\ell, \mathrm{out}}}$.
A simple and relevant case for which this holds is a \emph{scaled sum}, \ie $z(\mS_{\ell, n})\!=\!c \sum_{r=1}^R \vs_{\ell, n, r}$ with $c\!\in\!\R$.
Both vision transformers and convolutional neural networks with average pooling use scaled sums as the aggregation function (with $c\!=\!\sfrac{1}{R}$).
With this, we can state the corresponding statement to \Cref{prop:expand_deep_linear}.
\begin{restatable}[\textbf{Exactness of K-FAC-reduce for deep linear network in the reduce setting}]{proposition}{reducedeeplinear}
\label{prop:reduce_deep_linear}
    For layer $\ell$ of a deep linear network (\Cref{eq:deep_linear}),
    a Gaussian likelihood with \pd covariance matrix
    $\mSigma\!\in\!\R^{C \times C}$, and a scaled sum aggregation function,
    K-FAC-reduce is exact in the reduce setting.
\end{restatable}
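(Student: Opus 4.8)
The plan is to show that the product of the two Kronecker factors defined in \Cref{eq:kfac_reduce} reproduces the true layer-$\ell$ block of the GGN in \Cref{eq:GGN} \emph{exactly}, by exploiting two structural facts peculiar to this setting: in a deep linear network the per-row output Jacobians are constant, and for a Gaussian likelihood the output Hessian $\mLambda$ is constant across data points. Mirroring the argument for \Cref{prop:expand_deep_linear}, I would first collapse the true GGN into a single Kronecker product, and then check that the scaling constants in K-FAC-reduce are chosen precisely so that the product of its factors equals that Kronecker product.

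The crucial step is to establish that $\vb_{\ell,n,r} := \mJ_{\vs_{\ell,n,r}} f_\vtheta(\vx_n)^\transpose$ depends neither on $r$ nor on $n$. Suppose the scaled-sum aggregation $z(\mS_{k,n}) = c\sum_{r=1}^R \vs_{k,n,r}$ is applied at some layer $k \ge \ell$. Since the network is linear and weight-sharing does not mix rows, each row propagates as $\vs_{k,n,r} = \mW_k \cdots \mW_{\ell+1}\, \vs_{\ell,n,r}$, so after aggregation and the remaining layers the output reads $f_\vtheta(\mX_n) = c\,\mW_L \cdots \mW_{\ell+1} \sum_{r=1}^R \vs_{\ell,n,r}$. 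Differentiating, $\mJ_{\vs_{\ell,n,r}} f_\vtheta(\vx_n) = c\,\mW_L \cdots \mW_{\ell+1}$ is a fixed product of weight matrices, independent of both $r$ and $n$; write it as $\vb_\ell^\transpose$. This is exactly where linearity and the linear, row-symmetric scaled sum are essential — a nonlinearity or a max-type aggregation would make this Jacobian $r$- and $n$-dependent and the collapse below would fail.

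Given $r$-independence, the transposed Jacobian of \Cref{subsec:reduce} telescopes by bilinearity of the Kronecker product: $\mJ_{\vtheta_\ell} f_\vtheta(\vx_n)^\transpose = \sum_{r=1}^R \va_{\ell,n,r} \otimes \vb_\ell = \big(\sum_{r=1}^R \va_{\ell,n,r}\big) \otimes \vb_\ell$. For a Gaussian likelihood, $\mLambda(f_\vtheta(\vx_n)) = \mSigma^{-1}$ is independent of $n$. Substituting both into \Cref{eq:GGN} and applying the mixed-product property $(\mA\otimes\mB)(\mC\otimes\mD)=(\mA\mC)\otimes(\mB\mD)$ across the middle $\mSigma^{-1}$, each summand factorises; since $\vb_\ell\mSigma^{-1}\vb_\ell^\transpose$ is constant in $n$, I can pull it out of the sum to obtain the single Kronecker product $\GGN(\vtheta_\ell) = \big(\sum_{n=1}^N (\sum_r \va_{\ell,n,r})(\sum_r \va_{\ell,n,r}^\transpose)\big) \otimes \big(\vb_\ell \mSigma^{-1} \vb_\ell^\transpose\big)$.

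Finally I would match this against \Cref{eq:kfac_reduce}. Using $\sum_{r=1}^R \vb_{\ell,n,r} = R\,\vb_\ell$, the second factor becomes $\bm{\hat{B}}_\ell = \sum_n (R\vb_\ell)\mSigma^{-1}(R\vb_\ell^\transpose) = NR^2\,\vb_\ell\mSigma^{-1}\vb_\ell^\transpose$, while the first is $\bm{\hat{A}}_\ell = \tfrac{1}{NR^2}\sum_n (\sum_r \va_{\ell,n,r})(\sum_r \va_{\ell,n,r}^\transpose)$; in their Kronecker product the $NR^2$ factors cancel, reproducing the displayed expression for $\GGN(\vtheta_\ell)$ exactly (the aggregation constant $c$ also cancels, as it enters both $\vb_\ell$'s identically, so the result holds for any scaled sum). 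The main obstacle is the $r$-independence claim of the second paragraph; everything else is careful bookkeeping of the constants $c$, $R$, and $1/R^2$, together with the observation — analogous to the no-weight-sharing result of \citet{bernacchia2018exact} and to \Cref{prop:expand_deep_linear} — that it is the constancy of both $\vb_\ell$ and $\mSigma^{-1}$ across the data that reduces the sum of Kronecker products to a single one and thereby makes the ``Kronecker product of sums'' approximation exact.
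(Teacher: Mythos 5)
Your proof is correct and takes essentially the same route as the paper's: the paper first proves a sufficient-condition lemma (\Cref{lem:cond_kfac_reduce}, requiring $\vb_{\ell,n,r}$ constant across $r$ and $\bm{{\hat{b}}}_{\ell,n}\mLambda(f_\vtheta(\vx_n))\bm{{\hat{b}}}_{\ell,n}^\transpose$ constant across $n$) and then verifies those conditions for the deep linear network via $\vb_{\ell,n,r} = c\,\mW_\ell^{a^\transpose}$ and $\mLambda = \mSigma^{-1}$, which are exactly the two facts you establish. Your only departure is organisational: you inline the lemma's Kronecker-collapse computation and match the scaling constants $NR^2$ directly against \Cref{eq:kfac_reduce}, rather than factoring the argument through a reusable lemma as the paper does for both the single-layer and deep cases.
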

Notably, for a deep linear model with convolution layers and average pooling, K-FAC reduce will be exact under the assumptions of \Cref{prop:reduce_deep_linear}, whereas the approximation proposed in the literature \citep{grosse2016kronecker} (which corresponds to K-FAC-expand) is not.
Moreover, the calculation of $\mA_\ell$ costs $\mathcal{O}(NRP_{\ell, \mathrm{in}}^2)$ for K-FAC-expand and only $\mathcal{O}(NP_{\ell, \mathrm{in}} (P_{\ell, \mathrm{in}}\!+\!R))$ for $\bm{{\hat{A}}}_\ell$ and K-FAC-reduce.
The same holds for $\mB_\ell$ and $\bm{{\hat{B}}}_\ell,$ with complexities of
$\mathcal{O}(NRCP_{\ell, \mathrm{out}}(C\!+\!P_{\ell, \mathrm{out}}))$ and $\mathcal{O}(NCP_{\ell, \mathrm{out}}(C\!+\!P_{\ell, \mathrm{out}}\!+\!R))$.
The space complexity of K-FAC's overhead also differs by a factor of $R$, with $\mathcal{O}(NR(P_{\ell, \mathrm{in}} + P_{\ell, \mathrm{out}}))$ for K-FAC-expand and $\mathcal{O}(N(P_{\ell, \mathrm{in}} + P_{\ell, \mathrm{out}}))$ for K-FAC-reduce.

\section{Experiments}
\label{sec:experiments}
%!TEX root=../neurips_2023.tex
In \cref{subsec:exp_timing}, we empirically validate the speed up of K-FAC-reduce compared to K-FAC-expand in line with their computational complexities with a Wide ResNet on CIFAR-10 and show how this scales with the batch size.
Additionally, in \cref{subsec:exp_gnn,subsec:exp_vit}, we compare the two approximations based on their downstream optimisation performance.
We validate that both K-FAC variations can reduce the number of required steps to achieve a fixed validation target compared to a well-tuned first-order reference optimiser.
For this, we use a GNN and a vision transformer (ViT), two of the examples in \Cref{subsec:weight_sharing}.
We leverage the codebase and validation metric targets of the MLCommons \texttt{algorithmic-efficiency} (\texttt{AlgoPerf}) repository\footnote{\url{https://bit.ly/algoperf}} \citep{dahl2023benchmarking}, and use their target-setting runs as reference runs.
Both K-FAC variations use the same learning rate schedule as those target-setting runs but the warmup and expected number of steps are multiplied by $0.75$ to account for the fact that second-order methods tend to require fewer steps.
Additionally, we tune the learning rate and damping for K-FAC via random search and set all other hyperparameters to the same values as the reference target-setting run.
To exclude the adjusted learning rate schedule and the additional tuning of the learning rate as confounding factors, we additionally applied the same tuning to the reference runs; however, none of these runs hit the target validation performance.
Despite these considerations, the experiments in \cref{subsec:exp_gnn,subsec:exp_vit} are \emph{not} meant to demonstrate that K-FAC is a superior training algorithm (see \cref{app:algoperf}).
However, the reference runs allow us to put the training performances of both K-FAC flavours into perspective and also indicate the potential of extending second-order methods like K-FAC to modern neural network architectures
Lastly, to demonstrate a use case of K-FAC other than as a preconditioner, we automatically learn the weight decay hyperparameter by optimising a Laplace approximation of the marginal likelihood in \cref{subsec:exp_marglik}.
Our K-FAC implementation leverages the \texttt{ASDL} package \citep{osawa2023asdl}.
See \cref{app:exp} for more details on the experiments and additional results.

\subsection{Update step speed with K-FAC-expand and K-FAC-reduce}
\label{subsec:exp_timing}

\begin{wraptable}{r}{0.4\textwidth}
	\centering
    \vspace{-1.75em}
    \caption{Timing [s] of an update step with the two K-FAC variants for a Wide ResNet on CIFAR-10.}
	\label{tab:timing}
    \vspace{0.5em}
	\begin{adjustbox}{max width=0.4\textwidth,caption={test}}
		\begin{tabular}{c c c c c c}
			\toprule
			\multirow{2}{*}{K-FAC} & \multicolumn{5}{c}{Batch size} \\
            & $128$ & $256$ & $512$ & $1024$ & $2048$ \\
			\midrule
			\textcolor{expand}{expand} & $0.24$ & $0.38$ & $0.75$ & $1.36$ & OOM \\
%			\midrule
			\textcolor{reduce}{reduce} & $0.17$ & $0.24$ & $0.43$ & $0.63$ & $1.17$ \\
			\bottomrule
		\end{tabular}
	\end{adjustbox}
	%\vspace{-0.5em}
\end{wraptable}
To empirically validate the smaller computational complexity of K-FAC-reduce compared to K-FAC-expand, we time a single preconditioned gradient update step for a Wide ResNet on CIFAR-10 with both approximations and five different batch sizes on an NVIDIA V100 GPU (\Cref{tab:timing}).
As expected based on the complexities of the two approximations, we see increasing gains in speed with increasing batch size.
For the largest tested batch size, $2048$, K-FAC-reduce is faster than K-FAC-expand is for half the batch size.
Moreover, K-FAC-expand runs out-of-memory (OOM) at a batch size of $2048$.
Notably, we implement both approximations with the naive unfold operation (\cf \Cref{subsec:weight_sharing}), but K-FAC-reduce could be implemented even more efficiently by never explicitly allocating memory for the full unfolded layer input, which has been shown to lead to a further speed-up of about $4.5\times\!$ \citep{dangel2023convolutions}.
To highlight the speed difference of the two K-FAC variations for language models, we compute a K-FAC GGN approximation for \texttt{nanoGPT} \citep{karpathy2023nanogpt}, a popular implementation of GPT-2 \citep{radford2019gpt2}, on the full DART dataset \citep{nan2021dart}.
K-FAC-reduce is significantly faster than K-FAC-expand, which is currently used for transformers, only taking $70\%$ of the time of K-FAC-expand.
We expect this discrepancy to become even more pronounced for larger batch sizes and sequence lengths.

\subsection{Graph neural network on ogbg-molpcba}
\label{subsec:exp_gnn}

We use a basic GraphNetwork \citep{battaglia2018relational} with multi-layer perceptrons as the update functions (\cf \Cref{app:gnn}) and train it on the \texttt{ogbg-molpcba} dataset \citep{hu2020open} (\Cref{fig:ogbg_gnn}).
The target validation metric is a mean average precision (mAP) of $0.28098$.
The reference algorithm is SGD with Nesterov momentum, with the tuned hyperparameters from the \texttt{AlgoPerf} target-setting runs.
For both K-FAC variations, the statistics and preconditioner are updated every $10$ iterations.
We report the mean and standard error for five runs with different random seeds.
The reference run reaches the target in $31{,}887 \pm 2{,}070$ steps, whereas K-FAC-expand takes $16{,}106 \pm 863$ and K-FAC-reduce $15{,}566 \pm 429$ steps, about $51\,\%$ and $49\,\%$ of the reference run, respectively.
In wall-clock time, the reference run takes about $2.33 \pm 0.22$ hours, whereas K-FAC-expand takes $1.32 \pm 0.11$ and K-FAC-reduce $1.23 \pm 0.03$ hours, about $57\,\%$ and $53\,\%$ of the reference run, respectively.
The reduced number of steps almost directly translates into reduced wall-clock time since the update step time is dominated by the data pipeline.
K-FAC-reduce appears to be slightly faster in steps and wall-clock time than K-FAC-expand, but not significantly.
Moreover, the runs with K-FAC-reduce have lower variance.
\begin{figure}[t]
	\centering
	\includegraphics[width=\textwidth]{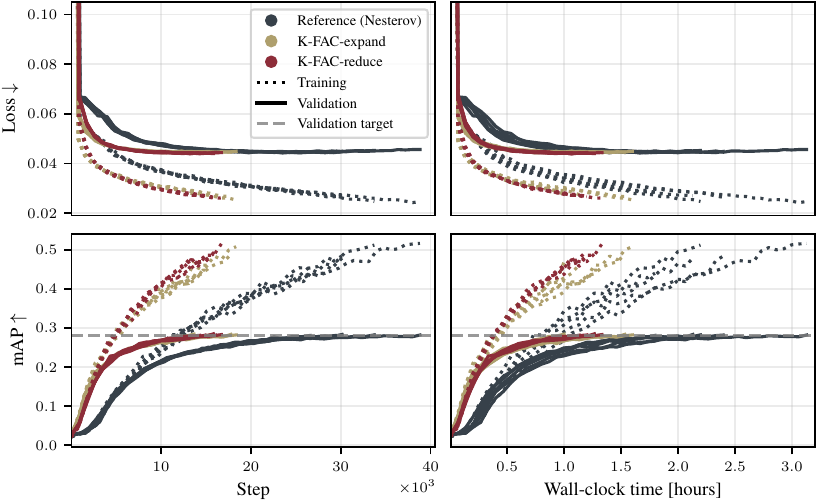}
	\caption{\textbf{Training results for a graph neural network on ogbg-molpcba.}
		Both K-FAC variations require $\approx\!50\,\%$ of the steps which almost directly translates into reduced wall-clock time. The K-FAC statistics are updated every $10$ steps. We show runs with five different random seeds for each method.}
	\label{fig:ogbg_gnn}
\end{figure}

\subsection{Vision transformer on ImageNet}
\label{subsec:exp_vit}

We train a ViT \citep{dosovitskiy2021vit} on LSVRC-2012 ImageNet \citep{russakovsky2015imagenet} and use NAdamW \citep{loshchilov2019decoupled} with the hyperparameters from the \texttt{AlgoPerf} target-setting runs as the reference run (\Cref{fig:imagenet_vit});
each run is repeated with three different random seeds.
For K-FAC-expand and K-FAC-reduce, we update the K-FAC statistics and preconditioner every $50$ iterations.
The reference run reaches the target validation accuracy of $0.77309$ in $122{,}139 \pm 1119$ steps or about $26.10 \pm 0.24$ hours.
K-FAC-expand only takes $87{,}464 \pm 245$ steps or $19.57 \pm 0.13$ hours, and K-FAC-reduce $92{,}550 \pm 731$ steps or $20.28 \pm 0.00$ hours.
This corresponds to about $72\,\%$ of the steps and $75\,\%$ of the time of the reference run for K-FAC-expand and $76\,\%$ of the steps and $78\,\%$ of the reference run's time for K-FAC-reduce.
While the difference in speed between both K-FAC variations does become less relevant due to the infrequent K-FAC updates, one update step with K-FAC-expand takes about $1.36\times\!$ as much as with K-FAC-reduce, which has a significant impact on the runtime when the K-FAC statistics are computed more frequently;
see \Cref{fig:imagenet_vit_slow} for a run where the update is performed every step.
While K-FAC-expand requires slightly fewer steps and wall-clock time than K-FAC-reduce here, the difference does not appear significant.
% One update step time reference run: $0.75761 \pm 0.053605$ seconds.
% One update step time K-FAC-expand: $1.937 \pm 0.052055$ seconds. -> 2.56x reference run, 1.36x reduce
% One update step time K-FAC-reduce: $1.428 \pm 0.76436$ seconds. -> 1.88x reference run
%
\begin{figure}[t]
	\centering
	\includegraphics[width=\textwidth]{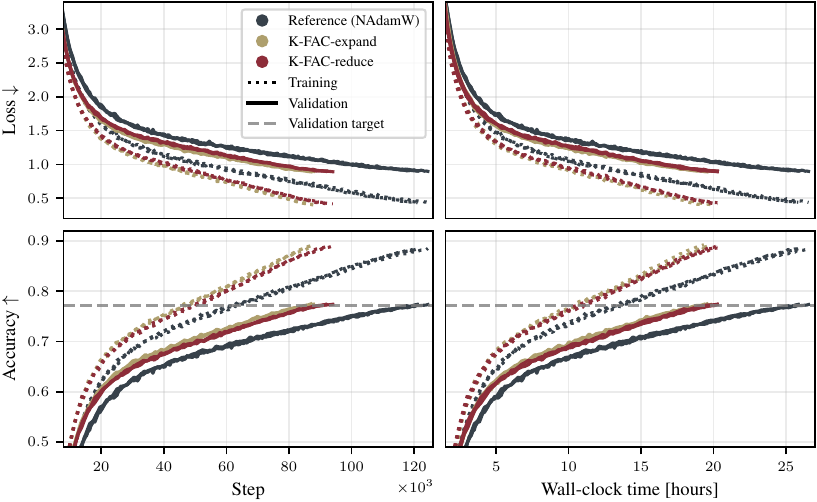}
	\caption{\textbf{Training results for a vision transformer on ImageNet.}
		The K-FAC statistics are updated every $50$ steps. Due to amortising the costs, the reduced number of steps to the target translates into reduced wall-clock time. We show runs with three different random seeds for each method.}
	\label{fig:imagenet_vit}
\end{figure}

\subsection{K-FAC for automatic hyperparameter selection via marginal likelihood optimisation}
\label{subsec:exp_marglik}

K-FAC has also been used for marginal likelihood maximisation in Bayesian neural networks using Laplace approximations \citep{immer2021scalable,daxberger2021laplace}.
In particular, K-FAC can be used to obtain a scalable Laplace approximation and optimise its marginal likelihood with respect to regularisation parameters, \eg, weight decay, during training.
\begin{wraptable}{r}{0.4\textwidth}
	\centering
	\vspace{-0.75em}
	\caption{Performance of K-FAC variants for marginal likelihood maximisation on CIFAR-10 with a Wide ResNet with and without data augmentation (DA).}
	\label{tab:marglik}
	\vspace{0.5em}
	\begin{adjustbox}{max width=0.4\textwidth,caption={test}}
		\begin{tabular}{c c c c c}
			\toprule
			K-FAC & DA & NLL $\downarrow$ & Acc [\%] $\uparrow$ & Time [\%] $\downarrow$ \\
			\midrule
			\multirow{2}{*}{\textcolor{expand}{expand}} & \xmark & $0.42$ & $88.9$ & \multirow{2}{*}{$100$} \\
			& \cmark & $0.24$ & $92.5$  \\
			\addlinespace
			%			\midrule
			\multirow{2}{*}{\textcolor{reduce}{reduce}} & \xmark & $0.70$ & $86.7$ & \multirow{2}{*}{$50.5$} \\
			& \cmark & $0.35$ & $93.5$ & \\
			\bottomrule
		\end{tabular}
	\end{adjustbox}
	\vspace{-0.5em}
\end{wraptable}
Therefore, we compare K-FAC-reduce and K-FAC-expand in this setting following the setup of \citet{daxberger2021laplace} on CIFAR-10 with a Wide ResNet~\citep{zagoruyko2016wide} and optimise a weight decay parameter per layer every five epochs during $100$ epochs of training.
\Cref{tab:marglik} shows that K-FAC-reduce is significantly faster than K-FAC-expand in this setting, but K-FAC-expand provides better test negative log likelihoods (NLL) in both settings.

\section{Discussion and conclusion}
%!TEX root=../neurips_2023.tex
We have leveraged the idea of linear weight-sharing layers to generalise K-FAC to a broader range of architectures.
This resulted in two distinct flavours of the approximation, which are exact for simple cases with deep linear networks in their respective setting.
Maybe surprisingly, we see little difference between the two K-FAC variations in terms of their downstream optimisation performance.
Moreover, in some settings, the overhead of K-FAC can be amortised or is negligible.
This is, for example, the case when 1) the data loading dominates the algorithm's update step time (\cf the GNN experiment), 2) when the K-FAC update frequency can be reduced sufficiently (\cf the ViT experiment), or 3) when underutilised resources can be leveraged to compute K-FAC updates, for example when using pipeline parallelism for language models \citep{osawa2022pipefisher}.
In these settings, the improvement in wall-clock time of K-FAC-reduce compared to K-FAC-expand is rendered insignificant.
%Still, K-FAC-reduce offers a significant reduction in wall-clock time per K-FAC update compared to K-FAC-expand and might enable using larger batch sizes given memory constraints, as demonstrated in \Cref{subsec:exp_timing}.
However, in applications where the K-FAC overhead still significantly impacts the runtime, \eg in the online weight decay selection experiment in \Cref{subsec:exp_marglik}, or when there are memory constraints (\cf \Cref{subsec:exp_timing}), K-FAC-reduce can greatly reduce the runtime compared to the previously used K-FAC-expand.
Some challenges remain when applying K-FAC in modern deep learning settings.
For example, the memory overhead can be prohibitive, even for K-FAC-reduce, which could be addressed by sparse structured Kronecker factors \citep{zhang2018noisy,grosse2023studying}.
Also, numerical instabilities might arise in low-precision settings due to the need for matrix inversions or decompositions;
this could be addressed by inverse-free methods relying on the K-FAC approximation \citep{lin2023simplifying}.

\clearpage

\section*{Acknowledgements}
%!TEX root=../neurips_2023.tex
The authors thank Kazuki Osawa for fruitful discussions and support with using and extending the \texttt{ASDL} library.
The authors also thank Wu Lin for pointing out a mistake in the original equation of the K-FAC-expand approximation.
Finally, the authors thank Felix Dangel for helpful comments on this manuscript.
Runa Eschenhagen is supported by ARM and the Cambridge Trust.
Alexander Immer is funded by the Max Planck ETH Center for Learning Systems (CLS).
Richard E. Turner is supported by Google, Amazon, ARM, Improbable and EPSRC grant EP/T005386/1.
Frank Schneider and Philipp Hennig are funded by the Deutsche Forschungsgemeinschaft (DFG, German Research Foundation) in the frame of the priority programme SPP 2298 ``Theoretical Foundations of Deep Learning'' - HE 7114/5-1, as well as the DFG Cluster of Excellence ``Machine Learning - New Perspectives for Science'', EXC 2064/1, project number 390727645; the German Federal Ministry of Education and Research (BMBF) through the T\"ubingen AI Center (FKZ: 01IS18039A); and funds from the Ministry of Science, Research and Arts of the State of Baden-W\"urttemberg.

\bibliography{neurips_2023}
\bibliographystyle{bibstyle}

\clearpage
\appendix
\onecolumn

%!TEX root=../neurips_2023.tex
\section{Weight-sharing in graph neural networks}
\label{app:gnn}

In this section, we expand on the third example in \Cref{subsec:weight_sharing} on graph neural networks (GNNs) and show how they use linear layers with weight-sharing.

\textbf{Graph convolutional network for node classification.}
A popular type of GNN is called graph convolutional network \citep[GCN]{kipf2016gcn}.
A GCN defines a convolution operation on graph structures, by repeatedly aggregating feature information over the neighbourhood of a node.
As a regular convolutional neural network, it also uses weight-sharing; see \citet{liu2020comprehensive} for a comprehensive discussion on weight-sharing in GCNs.
However, in contrast to the other models presented here, they utilise a slightly different type of weight-sharing, which will become apparent in \Cref{eq:gcn_input}.
Nevertheless, we briefly mention this case here, since the only work on K-FAC for GNNs has been on this model architecture and we will explicitly show how K-FAC was applied in this case in \Cref{app:kfac_gnns};
this relies on the notation introduced here.

A graph is defined as $\mathcal{G} := (\mathcal{V}, \mathcal{E})$, where $\mathcal{V}$ is the set of $N$ nodes and $\mathcal{E}$ the set of edges.
The edges can be encoded relative to the nodes in an adjacency matrix $\mC \in \R^{N \times N}$ with $C_{ij} = 0$ if there is no edge and $C_{ij} = 1$ if there is one.
Typically, they are used for node and graph classification tasks.
Here, we focus on node classification, \eg classifying scientific publications which are represented as nodes in a citation network into topics \citep{sen2008collective}.

The $\ell$-th GCN layer is defined as
\begin{equation}
  f_{\vtheta_\ell}(\mX) = \phi(\bm{{\hat{C}}} \mX \mW_\ell^\transpose)
\end{equation}
which is identical to a regular dense linear layer from \Cref{subsec:weight_sharing}, but the input matrix $\mX \in \R^{N \times P_{\ell, \mathrm{in}}}$, which has the $N$ node features $\vx_n$ of size $P_{\ell, \mathrm{in}}$ stacked in the rows, is first transformed by the normalised adjacency matrix $\bm{{\hat{C}}} := (\mD + \mathbf{I}_N)^{-\frac{1}{2}} (\mC + \mathbf{I}_N) (\mD + \mathbf{I}_N)^{-\frac{1}{2}}$, where $\mD$ is the diagonal node degree matrix of the graph and $\mathbf{I}_N$ is the $N \times N$ identity matrix.

Defining
\begin{equation}
\label{eq:gcn_input}
  \bm{\tilde{x}}_n := \sum_{j=1}^N \bm{{\hat{C}}}_{nj} \vx_j = \sum_{j \in \mathcal{N}(n)} \bm{{\hat{C}}}_{nj} \vx_j,
\end{equation}
we can express the forward pass for a single node and layer as
\begin{equation}
\label{eq:gcn}
  f_{\vtheta_\ell}(\bm{\tilde{x}}_n) = \phi(\mW_\ell \bm{\tilde{x}}_n),
\end{equation}
where $\mathcal{N}(n) := \{j \in \{1, \dots, N\} | \bm{{\hat{C}}}_{nj} \neq 0 \}$ is the neighbourhood of the node with index $n.$
Notably, the forward pass for a single node $\vx_n$ depends on its neighbourhood, \ie we cannot express the forward pass for the node without access to the feature information of the nodes in its neighbourhood $\mathcal{N}(n).$
Moreover, we can now see that the forward pass through the linear layer, \ie the matrix multiplication of the weight matrix $\mW_\ell$ with the transformed input $\bm{\tilde{x}}_n$, does not need the notion of weight-sharing anymore, in the sense, that we do not need a batched matrix-vector product over a weight-sharing dimension.
This is because we aggregate over each node's neighbourhood, over which the weights are shared, \emph{before} the matrix-vector product.
Hence, in contrast to the GraphNetwork introduced in the next paragraph, this model does not require special consideration when applying K-FAC (\cf setting (i) in \Cref{subsec:base_cases}).

\textbf{GraphNetwork for graph classification.}
One more general formulation of a GNN is an instance of the GraphNetwork introduced in \citet{battaglia2018relational}.
The GraphNetwork in its general form takes a graph $\mathcal{G} = (\vu, \mathcal{V}, \mathcal{E})$ where $\vu \in \R^{D_u}$ are the global features of the graph, and $\mathcal{V}$ and $\mathcal{E}$ are the sets of nodes and edges, respectively, just as before. We can also write the $i$-th graph of a dataset of $N$ graphs as a 5-tuple $\sX_n^G := (\vx_n^u, \mX_n^V, \mX_n^E, \vr_n, \vs_n)$, with global features $\vx_n^u \in \R^{D_u}$, node features $\mX_n^V \in \R^{N_n^V \times D_V}$, and edge features $\mX_n^E \in \R^{N_n^E \times D_E}$ for all $n = 1, \dots, N$.
The two vectors $\vr_n \in \R^{N_n^E}$ and $\vs_n \in \R^{N_n^E}$ contain the indices of the receiving and sending nodes of each edge, respectively.
Using these indices, we define $\mX_{n, \vr_n}^V \in \R^{N_n^E \times D_V}$ and $\mX_{n, \vs_n}^V \in \R^{N_n^E \times D_V}$ which contain the node features $\mX_n^V$ at indices $\vs_n$ and $\vr_n$, respectively.
Note, that these graph inputs unfortunately cannot trivially be batched by stacking them, since the number of nodes $N_n^V$ or edges $N_n^E$ are not necessarily the same for all $n \in \{1, \dots, N \}$.

A GraphNetwork block updates the 3-tuple $(\vx_n^u, \mX_n^V, \mX_n^E)$ by using three update functions $\phi$,
\begin{equation}
\begin{split}
  \mX_n^E &\leftarrow \phi^E(\mX_n^E, \mX_{n, \vr_n}^V, \mX_{n, \vs_n}^V, \vx_n^u) \\
  \mX_n^V &\leftarrow \phi^V(\mX_n^V, \bm{\tilde{X}}_n^E, \vx_n^u) \\
  \vx_n^u &\leftarrow \phi^u(\vx_n^u, \bm{\bar{X}}_n^V, \bm{\bar{X}}_n^E),
\end{split}
\end{equation}
and three permutation-invariant aggregation functions $\rho$
\begin{equation}
  \begin{split}
    \bm{\tilde{X}}_n^E &\leftarrow \rho^{E \rightarrow V}(\mX_n^E) \\
    \bm{\bar{X}}_n^E &\leftarrow \rho^{E \rightarrow u}(\mX_n^E) \\
    \bm{\bar{X}}_n^V &\leftarrow \rho^{V \rightarrow u}(\mX_n^V).
\end{split}
\end{equation}
Examples of these aggregation functions include element-wise summation, mean, or maximum.

One forward pass through a GraphNetwork block corresponds to the following steps, where each step is executed for all $n \in \{1, \dots, N\}$:
\begin{enumerate}
  \item Update edges $\mX_n^E$ with $\phi^E(\mX_n^E, \mX_{n, \vr_n}^V, \mX_{n, \vs_n}^V, \vx_n^u)$.
  \item Aggregate updated edges over all nodes in $\bm{\tilde{X}}_n^E \in \R^{N_n^V \times D_E}$ using $\rho^{E \rightarrow V}(\mX_n^E)$.
  \item Update nodes $\mX_n^V$ using $\phi^V(\mX_n^V, \bm{\tilde{X}}_n^E, \vx_n^u)$.
  \item Aggregate updated edges over all graphs in $\bm{\bar{X}}_n^E \in \R^{D_E}$ using $\rho^{E \rightarrow u}(\mX_n^E)$.
  \item Aggregate updated nodes over all graphs in $\bm{\bar{X}}_n^V \in \R^{D_V}$ using $\rho^{V \rightarrow u}(\mX_n^V)$.
  \item Update global features $\vx_n^u$ with $\phi^u(\vx_n^u, \bm{\bar{X}}_n^V, \bm{\bar{X}}_n^E)$.
\end{enumerate}

In this work, we consider graph classification; for example, molecules can be represented as graphs and we could classify them according to some chemical property (\eg the ogbg-molpcba dataset used in \Cref{subsec:exp_gnn}).
We specifically consider a GraphNetwork instance with simple MLPs for all update functions $\phi$ and an element-wise sum for the aggregation functions $\rho$.
Moreover, multiple of these GraphNetwork blocks can be stacked on top of each other.
To classify the input graphs, an MLP is applied to the global features $\vx_n^u$ after they are updated by the last GraphNetwork block.

To be more precise, the update functions are in this case specified as
\begin{equation}
\begin{split}
  \phi^E(\mX_n^E, \mX_{n, \vr_n}^V, \mX_{n, \vs_n}^V, \vx_n^u) &:= \concat(\mX_n^E, \mX_{n, \vr_n}^V, \mX_{n, \vs_n}^V, \mathrm{repeat}_{N_n^E}(\vx_n^u)) \mW^{E^\transpose} \\ % \in \R^{N_n^E \times D_E} \\
  \phi^V(\mX_n^V, \bm{\tilde{X}}_n^E, \vx_n^u) &:= \concat(\mX_n^V, \bm{\tilde{X}}_n^E, \mathrm{repeat}_{N_n^V}(\vx_n^u)) \mW^{V^\transpose} \\ % \in \R^{N_n^V \times D_V} \\
  \phi^u(\vx_n^u, \bm{\bar{X}}_n^V, \bm{\bar{X}}_n^E) &:= \mW^u \concat(\vx_n^u, \bm{\bar{X}}_n^V, \bm{\bar{X}}_n^E) \\ % \in \R^{D_u},
\end{split}
\end{equation}
with $\mW^E \in \R^{D_E \times (D_E + 2D_V + D_u)}, \mW^V \in \R^{D_V \times (D_V + D_E + D_u)}$, and $\mW^u \in \R^{D_u \times 3 D_u}$.

Note, that this is a simplification since in reality, the update functions $\phi$ are MLPs with ReLU activations, layer normalisation \citep{ba2016layer}, and dropout \citep{hinton2012improving}.
Also, we omit the potential bias vectors.
However, these components are not relevant for deriving K-FAC for the linear layers within these networks, which is why we can omit them here for simplicity.

Most importantly, we can now observe that this type of GNN shares its weights over each graph's edges and nodes:
just as for the attention or convolution operations described in \Cref{subsec:weight_sharing}, we apply the transposed weight matrices from the right side of the input of the layers of type $\phi^E$ and $\phi^V$, \ie for updating the edge and node features.
However, since the number of edges $N_n^E$ and the number of nodes $N_n^V$ is not necessarily the same for all $N$ graphs, we now have a weight-sharing dimension of size $R_n$, which depends on the $n$-th input.
Also, note that the update function $\phi^u$ used to update the global features $\vx_n^u$ does not use any weight-sharing, as there is just one feature vector per data point.
We have specifically introduced this notation of the inputs to show that the edge and node feature update functions are exactly examples of the concept of a linear weight-sharing layer, as introduced in \Cref{subsec:weight_sharing}.
This might not be immediately obvious when only looking at the original notation used in \citet{battaglia2018relational}; therefore, this can be seen as an instructive example for expressing a neural network architecture in terms of linear weight-sharing layers.
Consequently, our framework for K-FAC can directly be applied to this architecture, as we show in \Cref{app:kfac_gnns}.

\section{Extended derivation and discussion of K-FAC-expand and K-FAC-reduce}
\label{app:theory}

\subsection{Background: K-FAC for regular linear layers}
\label{app:kfac}

Kronecker-Factored Approximate Curvature \citep[K-FAC]{heskes2000natural,martens2015optimizing} was proposed as an efficient approximation to a neural network's Fisher information matrix.
The Fisher information matrix is defined as\footnote{More generally, it is defined with an expectation over $\vx \sim p(\vx)$ as well.}
\begin{equation}
	\label{eq:FIM}
	\begin{split}
		\mF(\vtheta) &= -\sum_{n=1}^N \E_{\vy \sim p(\vy | f_\vtheta(\vx_n))}[\nabla_\vtheta^2 \log p(\vy|f_\vtheta(\vx_n))] \\
		&= \sum_{n=1}^N \E_{\vy \sim p(\vy | f_\vtheta(\vx_n))}[\nabla_\vtheta \log  p(\vy|f_\vtheta(\vx_n)) (\nabla_\vtheta \log p(\vy|f_\vtheta(\vx_n)))^\transpose].
	\end{split}
\end{equation}
Notably, the labels $\vy$ are samples from the predictive distribution of the model and are not the empirical labels from the data.
Replacing $\vy$ with $\vy_n$ leads to the \emph{empirical} Fisher (EF), which is simply the uncentered covariance of the empirical gradient.
While it is commonly used as a replacement for the Fisher \citep{graves2011vi,kingma2014adam,chaudhari2017entropy}, it can give rise to very different downstream behaviour when used for optimisation \citep{kunstner2019limitations}.

First, in all of this work, we focus on a layer-wise K-FAC approximation of the Fisher, \ie it is approximated by a block-diagonal matrix
\begin{equation}
  \mF(\vtheta) \approx \diag{\mF(\vtheta_1), \dots, \mF(\vtheta_\ell), \dots, \mF(\vtheta_L)} \in \R^{P \times P},
\end{equation}
where $\mF(\vtheta_\ell) \in \R^{P_\ell \times P_\ell}$ and $\diag{\cdot, \dots, \cdot}$ build a block-diagonal matrix with the input matrices as blocks.

To derive K-FAC, we first note that the pre-activation for layer $\ell$ and the $n$-th data point $\vx_n$ can be expressed as $\vs_{\ell, n} = \mW_\ell \va_{\ell, n},$ with $\mW_\ell \in \R^{P_{\ell, \mathrm{out}} \times P_{\ell, \mathrm{in}}}$ and $\va_{\ell, n} \in \R^{P_{\ell, \mathrm{in}}},$ the input to the $\ell$-th layer (or equivalently, the activation of the $\ell-1$-th layer).
We have omitted an explicit bias parameter $\vb_\ell,$ since it can always be subsumed in $\mW_\ell.$
Hence, by applying the chain rule, the gradient of the loss \wrt the weights of the $\ell$-th layer can be written as $\nabla_{\mW_\ell} \mathcal{L}(\vy, f_\vtheta(\vx_n)) = \nabla_{\vs_{\ell, n}} \mathcal{L}(\vy, f_\vtheta(\vx_n)) \va_{\ell, n}^\transpose =: \vg_{\ell, n} \va_{\ell, n}^\transpose \in \R^{P_{\ell, \mathrm{out}} \times P_{\ell, \mathrm{in}}}.$

Using these insights, K-FAC then replaces the sum of expectations over Kronecker products with a Kronecker product of two sums of expectations, \ie
\begin{subequations}
\label{eq:kfac_extended}
\begin{align}
  \mF(\vtheta_\ell) &= \sum_{n=1}^N \E_{\vy \sim p(\vy|f_\vtheta(\vx_n))}[\vec(\nabla_{\mW_\ell} \mathcal{L}(\vy, f_\vtheta(\vx_n))) \vec(\nabla_{\mW_\ell} \mathcal{L}(\vy, f_\vtheta(\vx_n)))^\transpose] \label{eq:kfac:line-1} \\
  &= \sum_{n=1}^N \E_{\vy \sim p(\vy|f_\vtheta(\vx_n))}[\vec(\vg_{\ell, n} \va_{\ell, n}^\transpose) \vec(\vg_{\ell, n} \va_{\ell, n}^\transpose)^\transpose] \label{eq:kfac:line-2} \\
  &= \sum_{n=1}^N \E_{\vy \sim p(\vy|f_\vtheta(\vx_n))}[(\va_{\ell, n} \otimes \vg_{\ell, n})  (\va_{\ell, n}^\transpose \otimes \vg_{\ell, n}^\transpose)] \label{eq:kfac:line-3} \\
  &= \sum_{n=1}^N \E_{\vy \sim p(\vy|f_\vtheta(\vx_n))}[\va_{\ell, n} \va_{\ell, n}^\transpose \otimes \vg_{\ell, n} \vg_{\ell, n}^\transpose] \label{eq:kfac:line-4} \\
  %&\approx \frac{1}{N}  \left[ \sum_{n=1}^N \E_{\vy \sim p(\vy|f_\vtheta(\vx_n))}[\va_{\ell, n} \va_{\ell, n}^\transpose] \right] \otimes \left[ \sum_{n=1}^N \E_{\vy \sim p(\vy|f_\vtheta(\vx_n))}[\vg_{\ell, n} \vg_{\ell, n}^\transpose] \right] \label{eq:kfac:line-5} \\
  &\approx \underbrace{\left[ \frac{1}{N} \sum_{n=1}^N \va_{\ell, n} \va_{\ell, n}^\transpose \right]}_{=: \mA_\ell} \otimes \underbrace{\left[ \sum_{n=1}^N \E_{\vy \sim p(\vy|f_\vtheta(\vx_n))}[\vg_{\ell, n} \vg_{\ell, n}^\transpose] \right]}_{=: \mG_\ell} \label{eq:kfac:line-6},
\end{align}
\end{subequations}
where $\mA_\ell \in \R^{P_{\ell, \mathrm{in}} \times P_{\ell, \mathrm{in}}}$ and $\mG_\ell \in \R^{P_{\ell, \mathrm{out}} \times P_{\ell, \mathrm{out}}}.$
For this derivation, we have used three convenient properties of the Kronecker product (using matrices $\mA, \mB, \mC, \mD$ with appropriate dimensions):
$\vec(\mA\mB\mC) = (\mC^\transpose \otimes \mA) \vec(\mB)$ and $(\mA \otimes \mB)^\transpose = \mA^\transpose \otimes \mB^\transpose$ for \Cref{eq:kfac:line-3}, and $(\mA \otimes \mB)(\mC \otimes \mD) = \mA\mC \otimes \mB\mD$ for \Cref{eq:kfac:line-4}.

We can see that the approximation is exact in the trivial case of a single data point, \ie $N=1$.
Moreover, it is also exact in the case of a single linear layer or a deep linear network and a Gaussian likelihood \citep{bernacchia2018exact}.

K-FAC is more efficient than a naive block-wise approximation because we only have to store and invert two Kronecker factors instead of a larger dense matrix for each layer, which reduces the memory complexity from $\mathcal{O}(P_{\ell, \mathrm{in}}^2 P_{\ell, \mathrm{out}}^2)$ to $\mathcal{O}(P_{\ell, \mathrm{in}}^2 + P_{\ell, \mathrm{out}}^2)$ and the computational complexity of the preconditioning of the gradient with the approximate Fisher from $\mathcal{O}(P_{\ell, \mathrm{in}}^3 P_{\ell, \mathrm{out}}^3)$ to $\mathcal{O}(P_{\ell, \mathrm{in}}^3 + P_{\ell, \mathrm{out}}^3)$, since
\begin{equation}
\begin{split}
  \mF(\vtheta_\ell)^{-1} \vg(\vtheta_\ell) &\approx (\mA_\ell \otimes \mG_\ell)^{-1} \vg(\vtheta_\ell) \\
  &= \vec \left( \mG_\ell^{-1} \nabla_{\mW_\ell} \mathcal{L}(\vy, f_\vtheta(\vx_n)) \mA_\ell^{-1} \right)
\end{split}
\end{equation}
with $\vg(\vtheta_\ell) = \vec \left( \nabla_{\mW_\ell} \mathcal{L}(\vy, f_\vtheta(\vx_n)) \right)$ and the property $(\mA \otimes \mB)^{-1} = \mA^{-1} \otimes \mB^{-1}.$

Alternatively, we can derive K-FAC for the GGN \citep{botev2017practical}, which will recover the same result as for the Fisher in \Cref{eq:kfac} for many common loss functions, as we have learned \Cref{subsec:ggn}.
We define $\mJ_{\vtheta_\ell}(\vx_n) := \mJ_{\vtheta_\ell} f_\vtheta(\vx_n) = \mJ_{\vs_{\ell, n}} f_\vtheta(\vx_n) \mJ_{\vtheta_\ell} \vs_{\ell, n} \in \R^{C \times P_\ell}$ as the Jacobian of the model outputs \wrt the parameters of the $\ell$-th layer and $\mLambda(f_\vtheta(\vx_n)) := \mH_{f_\vtheta} \mathcal{L}(\vy_n, f_\vtheta(\vx_n)) \in \R^{C \times C}$ as the Hessian of the loss \wrt the model outputs.
Now we can write $\vs_{\ell, n} = \mW_\ell \va_{\ell, n} = (\va_{\ell, n}^\transpose \otimes \mathbf{I}_{P_{\ell, \mathrm{out}}}) \vec(\mW_\ell)$ and with this we have $\mJ_{\vtheta_\ell} \vs_{\ell, n} = \va_{\ell, n}^\transpose \otimes \mathbf{I}_{P_{\ell, \mathrm{out}}}.$
Additionally, by defining $\vb_{\ell, n} := \mJ_{\vs_{\ell, n}}f_\vtheta(\vx_n)^\transpose \in \R^{P_{\ell, \mathrm{out}} \times C}$ as the transposed Jacobian of the model outputs \wrt to the pre-activations of the $\ell$-th layer, we note that $\mJ_{\vtheta_\ell}(\vx_n)^\transpose = (\va_{\ell, n}^\transpose \otimes \mathbf{I}_{P_{\ell, \mathrm{out}}})^\transpose \vb_{\ell, n} = \va_{\ell, n} \otimes \vb_{\ell, n}.$

Replacing the (transposed) Jacobians in the definition of the GGN by this expression, we have
\begin{equation}
\begin{split}
  \GGN(\vtheta_\ell) &= \sum_{n=1}^N \mJ_{\vtheta_\ell}(\vx_n)^\transpose \mLambda(f_\vtheta(\vx_n)) \mJ_{\vtheta_\ell}(\vx_n) \\
  &= \sum_{n=1}^N (\va_{\ell, n} \otimes \vb_{\ell, n}) \mLambda(f_\vtheta(\vx_n)) (\va_{\ell, n} \otimes \vb_{\ell, n})^\transpose \\
  &= \sum_{n=1}^N (\va_{\ell, n} \va_{\ell, n}^\transpose) \otimes (\vb_{\ell, n} \mLambda(f_\vtheta(\vx_n)) \vb_{\ell, n}^\transpose) \\
  &\approx \underbrace{\left[ \frac{1}{N} \sum_{n=1}^N \va_{\ell, n} \va_{\ell, n}^\transpose \right]}_{=: \mA_\ell} \otimes \underbrace{\left[ \sum_{n=1}^N \vb_{\ell, n} \mLambda(f_\vtheta(\vx_n)) \vb_{\ell, n}^\transpose \right]}_{=: \mB_\ell}.
\end{split}
\end{equation}
This derivation is a bit more convenient for our purposes, as it does not require us to keep track of the expectation over the labels $\vy$, while still being equivalent to the Fisher for common loss functions (\cf \Cref{subsec:ggn}).
Moreover, in our context, it will be useful to have the Jacobians $\mJ_{\vtheta_\ell}(\vx_n)$ separate from the loss; therefore, we will only explicitly write our results for the GGN.

\subsection{K-FAC for linear weight-sharing layers}
\label{app:kfac_weight_sharing}

\subsubsection{The expand setting and K-FAC-expand}
\label{app:expand}

The expand setting can be identified by a loss with $N \cdot R$ terms, which corresponds to assuming $N \cdot R$ \iid examples,
\begin{tcolorbox}[colframe=expand, title={The Expand Setting}]
\begin{equation}
    \mathcal{L}_{\mathrm{expand}}(f_\vtheta, \mathcal{D}) := - \sum_{n=1}^N \sum_{r=1}^R \log p(\vy_{n,r} | f_\vtheta(\vx_n)_r),
\end{equation}
\end{tcolorbox}
where $f_\vtheta(\vx_n)_r$ is the $r$-th row of the model output $f_\vtheta(\vx_n) \in \R^{R \times C}$ and $\vy_{n,r}$ is the $r$-th row of the label $\mY_n\!\in\!\R^{R \times C}$.
A typical example of this type of loss function is language translation, where $N$ is the number of training examples and $R$ is the sequence length.

Note that we are not assuming our inputs $\vx_n$ to have an additional weight-sharing dimension since we only require that the input to the $\ell$-th layer has this additional dimension, \ie $\mA_{\ell, n} \in \R^{R \times D}$.
This obviously does not exclude the case where $\vx_n$ already has this weight-sharing dimension, \eg a sequence of tokens in translation tasks.

We can express the Jacobian of the $r$-th row of the model output $f_\vtheta(\vx_n) \in \R^{R \times C}$ \wrt the parameters $\vtheta_\ell$ as
\begin{equation}
\label{eq:expand}
  (\mJ_{\vtheta_\ell}(\vx_n)_r)_{ij} = \sum_{m=1}^R \sum_{p=1}^{P_{\ell, \mathrm{out}}} \frac{\partial f_\vtheta(\vx_n)_{ri}}{\partial \mS_{\ell, n, mp}} \frac{\partial \mS_{\ell, n, mp}}{\partial \vtheta_{\ell, j}}
\end{equation}
or in matrix form
\begin{equation}
  \mJ_{\vtheta_\ell}(\vx_n)_r = \sum_{m=1}^R \mJ_{\vs_{\ell, n, m}} f_\vtheta(\vx_n)_r \mJ_{\vtheta_\ell} \vs_{\ell, n, m}.
\end{equation}
Since the weights $\vtheta_\ell$ are shared across the weight-sharing dimension of size $R$, we can write the $r$-th row of $\mS_{\ell, n}$ as $\vs_{\ell, n, r} = \mW_\ell \va_{\ell, n, r}$ and we have $\mJ_{\vtheta_\ell} \vs_{\ell, n, r} = \va_{\ell, n, r}^\transpose \otimes \mathbf{I}_{P_{\ell, \mathrm{out}}}$, as for regular K-FAC (\cf \Cref{app:kfac}).
We denote $\vb_{\ell, n, r, k} := \mJ_{\vs_{\ell, n, k}} f_\vtheta(\vx_n)_r^\transpose$.
Hence, we have
\begin{equation}
\begin{split}
  \mJ_{\vtheta_\ell}(\vx_n)_r^\transpose &= \left( \sum_{m=1}^R \vb_{\ell, n, r, m}^\transpose (\va_{\ell, n, m}^\transpose \otimes \mathbf{I}_{P_{\ell, \mathrm{out}}}) \right)^\transpose \\
  %&= \sum_{m=1}^R (\va_{\ell, n, m} \otimes \mathbf{I}_{P_{\ell, \mathrm{out}}}) \vb_{\ell, n, r, m} \\
  &= \sum_{m=1}^R \va_{\ell, n, m} \otimes \vb_{\ell, n, r, m}.
\end{split}
\end{equation}

On a high level, applying K-FAC to a model trained with this type of loss just requires treating the problem as if we had $N \cdot R$ \emph{independent} examples and deriving the approximation in the same way as we would with $N$ examples (\cf \Cref{app:kfac}),
\begin{equation}
\label{eq:kfac_expand_exact}
\begin{split}
  \GGN(\vtheta_\ell) &= \sum_{n=1}^N \sum_{r=1}^R \mJ_{\vtheta_\ell}(\vx_n)_r^\transpose \mLambda(f_\vtheta(\vx_n)_r) \mJ_{\vtheta_\ell}(\vx_n)_r \\
  &= \sum_{n=1}^N \sum_{r=1}^R \left( \sum_{m=1}^R \va_{\ell, n, m} \otimes \vb_{\ell, n, r, m} \right) \mLambda(f_\vtheta(\vx_n)_r) \left( \sum_{m'=1}^R \va_{\ell, n, m'}^\transpose \otimes \vb_{\ell, n, r, m'}^\transpose \right).
\end{split}
\end{equation}
However, we cannot directly write each of the $N \cdot R$ loss terms as a Kronecker product without any approximation.
One approach could be to approximate each $\mJ_{\vtheta_\ell}(\vx_n)_r$ with a Kronecker product of sums, as the K-FAC approximation does for the sum of the $N$ data points, but then we would have to be able to access $\sum_{m=1}^R \vb_{\ell, n, r, m}$.
Moreover, this would not even be exact in the simple settings we consider later.
In contrast, what can be implemented in practice without additional backward passes (\cf \Cref{app:practical}) and what has been used for convolutional neural networks \citep{grosse2016kronecker} and language transformers \citep{zhang2019which,pauloski2021kaisa,osawa2022pipefisher,grosse2023studying} is
\begin{equation}
\label{eq:kfac_expand_derivation}
\begin{split}
  \GGN(\vtheta_\ell) &\approx \sum_{n=1}^N \sum_{r=1}^R \sum_{m=1}^R \left( \va_{\ell, n, m} \otimes \vb_{\ell, n, r, m} \right) \mLambda(f_\vtheta(\vx_n)_r) \left( \va_{\ell, n, m}^\transpose \otimes \vb_{\ell, n, r, m}^\transpose \right) \\
  &= \sum_{n=1}^N \sum_{m=1}^R \left( \va_{\ell, n, m} \va_{\ell, n, m}^\transpose \right) \otimes \left( \sum_{r=1}^R \vb_{\ell, n, r, m} \mLambda(f_\vtheta(\vx_n)_r) \vb_{\ell, n, r, m}^\transpose \right), \\
\end{split}
\end{equation}
where we ignore all the terms with $m \neq m'$, which allows us to express each of the $N \cdot R$ terms as a Kronecker product.\footnote{However, the authors of the work on transformers do not discuss this extension of K-FAC at all; although \citet{zhang2018noisy} do explicitly discuss the tying of the embedding and linear output layer weights.}
Consequently, we can apply the regular K-FAC approximation over $N \cdot R$ terms instead of just $N$ terms as usual.
We call the resulting approximation \emph{K-FAC-expand}:
\begin{tcolorbox}[colframe=expand, title={K-FAC-expand}]
\begin{equation}
\label{eq:kfac_expand_app}
  \bm{{\hat{\mathrm{GGN}}}}_{\vtheta_\ell}^{\mathrm{expand}} := \underbrace{\left[ \frac{1}{NR} \sum_{n=1}^N \sum_{m=1}^R \va_{\ell, n, m} \va_{\ell, n, m}^\transpose \right]}_{= \mA_\ell} \otimes \underbrace{\left[\sum_{n=1}^N \sum_{m=1}^R \sum_{r=1}^R \vb_{\ell, n, r, m} \mLambda(f_\vtheta(\vx_n)_r) \vb_{\ell, n, r, m}^\transpose \right]}_{= \mB_\ell}
\end{equation}
\end{tcolorbox}

There is one simple case where the exact expression in \Cref{eq:kfac_expand_exact} is identical to the approximation in \Cref{eq:kfac_expand_derivation}.
When $\vb_{\ell, n, r, m} = \mathbf{0}$ for all $r \neq m$, both expression are equivalent to
\begin{equation}
\label{eq:kfac_expand_simple}
  \sum_{n=1}^N \sum_{r=1}^R \left( \va_{\ell, n, r} \otimes \vb_{\ell, n, r, r} \right) \mLambda(f_\vtheta(\vx_n)_r) \left( \va_{\ell, n, r}^\transpose \otimes \vb_{\ell, n, r, r}^\transpose \right).
\end{equation}
With other words, when $f_\vtheta(\vx_n)_r$ is independent of all pre-activations $\vs_{\ell, n, m}$ with $m \neq r$ the two expressions coincide.
This does not even hold for simplistic transformer models since the self-attention mechanism (\Cref{subsec:weight_sharing}) in transformers directly correlates elements across the weight-sharing dimension -- we discuss this in more detail in \Cref{app:kfac_attention}.
Alternatively, we could also scale \Cref{eq:kfac_expand_derivation} by $R$, which leads to a scaling of $1 / N$ instead of $1 / NR$ in \Cref{eq:kfac_expand_app}.
This might be a better approximation when $\vb_{\ell, n, r, m} \neq \mathbf{0}$ for $r \neq m$ and is also what has been used by previous work on transformers \citep{zhang2019which,pauloski2021kaisa,osawa2022pipefisher,grosse2023studying}.
However, we choose to use the scaling in \Cref{eq:kfac_expand_app} since the condition above holds for networks that simply stack multiple linear weight-sharing layers;
this allows us to show that the approximation in \Cref{eq:kfac_expand_app} is exact in the same simple cases as regular K-FAC.

For a typical neural network with nonlinear activation functions, K-FAC is only an approximation.
However, for regular individual linear layers and deep linear networks, K-FAC is known to be exact assuming a Gaussian likelihood \citep{bernacchia2018exact}.
While this holds for the full GGN/Fisher, we only focus on the block-diagonal case here.
To motivate K-FAC-expand, we want to show that similar statements hold for a single linear weight-sharing layer and deep linear networks with weight-sharing in the expand setting.
First, we state a simple condition for which the approximation is indeed exact;
this line of reasoning could also be applied to K-FAC for regular linear layers since only the effective number of data points changes from $N \cdot R$ to $N$.
Note, that we could also state more trivial sufficient conditions for the exactness of the approximation, \ie $N=R=1$ and when all inputs to a layer $\va_{\ell, n, r}$ are the same for all $n \in \{1, \dots, N\}$ and $r \in \{1, \dots, R\}$.
We do not state these types of conditions explicitly from now on.
\begin{lemma}[\textbf{Sufficient condition for exactness of K-FAC-expand in the expand setting}]
\label{lem:cond_kfac_expand}
    Let $\mC_\ell \in \R^{P_{\ell, \mathrm{out}} \times P_{\ell, \mathrm{out}}}$ be a
    constant matrix for layer $\ell$.
    If $\vb_{\ell, n, r, m} = \mathbf{0}$ for all $r \neq m$ and
    $\vb_{\ell, n, m, m} \mLambda(f_\vtheta(\vx_n)_m) \vb_{\ell, n, m, m}^\transpose = \mC_\ell$
    for all $n \in \{1, \dots, N\}$ and $m \in \{1, \dots, R\},$ then the K-FAC
    approximation in \Cref{eq:kfac_expand} is equal to the exact GGN/Fisher of
    the $\ell$-th layer in the expand setting.
\end{lemma}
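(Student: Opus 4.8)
The plan is to show that under the two hypotheses both the exact GGN of \Cref{eq:kfac_expand_exact} and the K-FAC-expand approximation of \Cref{eq:kfac_expand_app} collapse to the identical expression $\big(\sum_{n=1}^N \sum_{r=1}^R \va_{\ell, n, r} \va_{\ell, n, r}^\transpose\big) \otimes \mC_\ell$, so that the approximation is in fact exact.

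First I would use the vanishing condition $\vb_{\ell, n, r, m} = \mathbf{0}$ for $r \neq m$ to kill the inner sum over $m$ in the transposed Jacobian, leaving only the diagonal term $\mJ_{\vtheta_\ell}(\vx_n)_r^\transpose = \va_{\ell, n, r} \otimes \vb_{\ell, n, r, r}$. Substituting this into \Cref{eq:kfac_expand_exact} reduces the exact GGN to the single-sum form of \Cref{eq:kfac_expand_simple}, whose generic summand is $(\va_{\ell, n, r} \otimes \vb_{\ell, n, r, r})\, \mLambda(f_\vtheta(\vx_n)_r)\, (\va_{\ell, n, r}^\transpose \otimes \vb_{\ell, n, r, r}^\transpose)$.

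Next I would apply the Kronecker mixed-product property (as in \Cref{app:kfac}). Because $\va_{\ell, n, r}$ has a single column, right-multiplying $\va_{\ell, n, r} \otimes \vb_{\ell, n, r, r}$ by the $C \times C$ matrix $\mLambda(f_\vtheta(\vx_n)_r)$ acts only on the second factor, giving $\va_{\ell, n, r} \otimes (\vb_{\ell, n, r, r}\, \mLambda(f_\vtheta(\vx_n)_r))$; a second application of $(\mA \otimes \mB)(\mC \otimes \mD) = \mA\mC \otimes \mB\mD$ then yields $(\va_{\ell, n, r} \va_{\ell, n, r}^\transpose) \otimes (\vb_{\ell, n, r, r}\, \mLambda(f_\vtheta(\vx_n)_r)\, \vb_{\ell, n, r, r}^\transpose)$. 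The constancy hypothesis replaces the right factor by $\mC_\ell$, and bilinearity of the Kronecker product lets me pull the constant $\mC_\ell$ outside the double sum, producing the claimed closed form for the exact GGN.

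Finally I would evaluate the two K-FAC-expand factors under the same hypotheses. The factor $\mA_\ell = \tfrac{1}{NR}\sum_{n,m} \va_{\ell, n, m} \va_{\ell, n, m}^\transpose$ is untouched, while in $\mB_\ell = \sum_{n,m,r} \vb_{\ell, n, r, m}\, \mLambda(f_\vtheta(\vx_n)_r)\, \vb_{\ell, n, r, m}^\transpose$ only the $r = m$ terms survive the vanishing condition, each equal to $\mC_\ell$, so $\mB_\ell = NR\, \mC_\ell$. Forming $\mA_\ell \otimes \mB_\ell$, the scalar $\tfrac{1}{NR}$ and the factor $NR$ cancel and reproduce exactly the expression obtained above for the GGN. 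There is no genuine analytical obstacle here; the only delicate point — and the reason the normalisation $\tfrac{1}{NR}$ rather than $\tfrac{1}{N}$ is used in \Cref{eq:kfac_expand_app} — is precisely this cancellation, since summing the constant $\mC_\ell$ over all $NR$ effective examples produces the factor $NR$ that the normalisation must absorb. The remaining care lies solely in the bookkeeping of the Kronecker identities with the $C \times C$ curvature factor $\mLambda$ sitting between the Jacobians.
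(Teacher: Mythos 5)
Your proposal is correct and follows essentially the same route as the paper's proof: the vanishing condition $\vb_{\ell,n,r,m}=\mathbf{0}$ for $r\neq m$ collapses both the exact GGN and the K-FAC-expand expression to the diagonal sum of \Cref{eq:kfac_expand_simple}, the mixed-product property and the constancy hypothesis pull $\mC_\ell$ out of the sums, and the factor $NR$ from summing $\mC_\ell$ cancels the $\tfrac{1}{NR}$ normalisation. The only cosmetic difference is that you evaluate both sides to the common closed form $\bigl(\sum_{n,r}\va_{\ell,n,r}\va_{\ell,n,r}^\transpose\bigr)\otimes\mC_\ell$, whereas the paper chains equalities from the Kronecker-product-of-sums expression directly back to the sum of Kronecker products.
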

\begin{proof}
    As mentioned before, when $\vb_{\ell, n, r, m} = \mathbf{0}$ for all $r \neq m$
    the last line of \Cref{eq:kfac_expand_exact} and the first line of
    \Cref{eq:kfac_expand_derivation} both simplify to
    \Cref{eq:kfac_expand_simple}.
    Hence, we can directly show that the second and third approximation in
    \Cref{eq:kfac_expand_derivation} equal the exact expression for the GGN
    of layer $\ell$ from there.
    We have
    \begin{equation}
    \begin{split}
        &\left( \frac{1}{NR} \sum_{n=1}^N \sum_{r=1}^R \va_{\ell, n, r} \va_{\ell, n, r}^\transpose \right) \otimes \left( \sum_{n=1}^N \sum_{r=1}^R \vb_{\ell, n, r, r} \mLambda(f_\vtheta(\vx_n)_r) \vb_{\ell, n, r, r}^\transpose \right) \\
        = &\left( \frac{1}{NR} \sum_{n=1}^N \sum_{r=1}^R \va_{\ell, n, r} \va_{\ell, n, r}^\transpose \right) \otimes \left( NR \mC_\ell \right) \\
        = &\left(\sum_{n=1}^N \sum_{r=1}^R \va_{\ell, n, r} \va_{\ell, n, r}^\transpose \right) \otimes \mC_\ell \\
        = &\sum_{n=1}^N \sum_{r=1}^R \left( \va_{\ell, n, r} \va_{\ell, n, r}^\transpose \right) \otimes \left( \vb_{\ell, n, r, r} \mLambda(f_\vtheta(\vx_n)_r) \vb_{\ell, n, r, r}^\transpose \right),
    \end{split}
    \end{equation}
    where we have used the assumption that
    $\vb_{\ell, n, m, m} \mLambda(f_\vtheta(\vx_n)_m) \vb_{\ell, n, m, m}^\transpose = \mC_\ell$
    is the same for all $n \in \{1, \dots, N\}$ and $m \in \{1, \dots, R\}$.
\end{proof}
Leveraging this simple insight, we can provide an example of a single layer where the assumptions of \Cref{lem:cond_kfac_expand} are fulfilled.
\begin{proposition}[\textbf{Exactness of K-FAC-expand for single linear layer in the expand setting}]
\label{prop:expand_single_layer}
    For a single linear weight-sharing layer and a Gaussian likelihood with \pd
    covariance matrix $\mSigma \in \R^{C \times C}$, K-FAC-expand is
    exact in the expand setting.
\end{proposition}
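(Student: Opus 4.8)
The plan is to verify the two hypotheses of \Cref{lem:cond_kfac_expand} directly, since a single linear weight-sharing layer is the simplest instance in which the block structure of the per-row Jacobians can be read off by inspection. The key observation is that for a single layer the model output coincides with the layer's pre-activation, i.e. $f_\vtheta(\vx_n)_r = \vs_{\ell, n, r} = \mW_\ell \va_{\ell, n, r}$ with $C = P_{\ell, \mathrm{out}}$, so that the $r$-th output row depends only on the $r$-th pre-activation row. The biases are subsumed into $\mW_\ell$ and do not affect this structure.

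First I would compute $\vb_{\ell, n, r, m} = \mJ_{\vs_{\ell, n, m}} f_\vtheta(\vx_n)_r^\transpose$. Because $f_\vtheta(\vx_n)_r = \vs_{\ell, n, r}$, this Jacobian vanishes whenever $r \neq m$ and equals $\mathbf{I}_{P_{\ell, \mathrm{out}}}$ when $r = m$. This immediately establishes the first hypothesis of the lemma, namely $\vb_{\ell, n, r, m} = \mathbf{0}$ for all $r \neq m$, and also yields $\vb_{\ell, n, m, m} = \mathbf{I}_{P_{\ell, \mathrm{out}}}$. Here the identification $C = P_{\ell, \mathrm{out}}$ is what makes $\vb_{\ell, n, m, m}$ a square identity matrix.

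Next I would invoke the Gaussian likelihood to check the second hypothesis. For a Gaussian with \pd covariance $\mSigma$, the loss Hessian with respect to the model outputs is constant, $\mLambda(f_\vtheta(\vx_n)_m) = \mSigma^{-1}$, independent of both $n$ and $m$. Combining this with $\vb_{\ell, n, m, m} = \mathbf{I}_{P_{\ell, \mathrm{out}}}$ gives $\vb_{\ell, n, m, m} \mLambda(f_\vtheta(\vx_n)_m) \vb_{\ell, n, m, m}^\transpose = \mSigma^{-1}$ for every $n$ and $m$, so the second hypothesis holds with the constant matrix $\mC_\ell := \mSigma^{-1}$. Applying \Cref{lem:cond_kfac_expand} then yields exactness of K-FAC-expand in the expand setting.

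I expect essentially no serious obstacle: the statement reduces to the two observations that the output-row Jacobians are block-diagonal (a direct consequence of the single-layer, row-wise structure) and that the Gaussian curvature is a constant matrix. The only mild subtlety is the bookkeeping identification $C = P_{\ell, \mathrm{out}}$, which is what turns $\vb_{\ell, n, m, m}$ into a square identity so that the conjugation by $\vb_{\ell, n, m, m}$ leaves $\mLambda$ unchanged.
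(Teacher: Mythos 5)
Your proposal is correct and follows the paper's own proof essentially verbatim: both verify the two hypotheses of \Cref{lem:cond_kfac_expand} by noting that $f_\vtheta(\vx_n)_r = \mW_\ell \va_{\ell, n, r}$ forces $\vb_{\ell, n, r, m} = \mathbf{0}$ for $r \neq m$ and $\vb_{\ell, n, m, m} = \mathbf{I}_C$ (using $C = P_{\ell, \mathrm{out}}$), so that the Gaussian curvature $\mLambda = \mSigma^{-1}$ gives the constant matrix $\mC_\ell = \mSigma^{-1}$. No gaps; the argument is complete.
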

\begin{proof}
    We can write $f_\vtheta(\vx_n)_r= \mW_\ell \vx_{n, r}$ and hence
    $\vb_{\ell, n, r, m} = \mathbf{0}$ for $r \neq m$.
    Moreover, we have $\mLambda(f_\vtheta(\vx_n)_r) = \mSigma^{-1}$ and
    $\vb_{\ell, n, m, m} = \mathbf{I}_C$ ($P_{\ell, \mathrm{out}} = C$ for a single
    layer).
    Hence,
    \begin{equation*}
      \vb_{\ell, n, m, m} \mLambda(f_\vtheta(\vx_n)_r) \vb_{\ell, n, m, m}^\transpose = \mathbf{I}_C \mSigma^{-1} \mathbf{I}_C = \mSigma^{-1}
    \end{equation*}
    for all $n \in \{1, \dots, N\}$ and $m \in \{1, \dots, R\}$.
    Therefore, the desired result follows from \Cref{lem:cond_kfac_expand}.
\end{proof}
A natural question might be if the same result also holds for \emph{deep} linear networks.
A deep linear network is here defined as a model of the form
\begin{equation}
\label{eq:deep_linear_app}
    f_\vtheta(\vx) = \mW_L \dots \mW_\ell \dots \mW_1 \vx = \mW \vx,
\end{equation}
where $\vx \in \R^D$ and $\mW_L \in \R^{C \times P_{L, \mathrm{in}}}, \mW_\ell \in \R^{P_{\ell, \mathrm{out}} \times P_{\ell, \mathrm{in}}}$ (with $P_{\ell, \mathrm{in}} = P_{\ell-1, \mathrm{out}})$, and $\mW_1 \in \R^{P_{1, \mathrm{out}} \times D}$.
Decomposing a single weight matrix $\mW$ into $L$ separate ones is a common framework for theoretical analysis since it creates nonlinear training dynamics for gradient-based training algorithms, while still having analytical solutions \citep{saxe2014exact,bernacchia2018exact}.
We adopt the notation of \citet{bernacchia2018exact} and define
\begin{equation}
    \mW_\ell^a := \mW_L \dots \mW_{\ell+1}
\end{equation}
as the product of the weight matrices \emph{ahead} of $\mW_\ell$ and
\begin{equation}
    \mW_\ell^b := \mW_{\ell-1} \dots \mW_1
\end{equation}
as the product of the weight matrices \emph{behind} of $\mW_\ell$.
Hence, we can write $f_\vtheta(\vx) = \mW_\ell^a \mW_\ell \mW_\ell^b \vx$.
Note, that now
\begin{equation}
    \va_{\ell, n, r} = \mW_\ell^b \vx_{n,r} \in \R^{P_{\ell, \mathrm{in}}}
\end{equation}
and
\begin{equation}
    \vb_{\ell, n, r, r} = \mW_\ell^{a^\transpose} \in \R^{P_{\ell, \mathrm{out}} \times C}.
\end{equation}
Using these insights, we can now easily state the result for deep linear networks.
\expanddeeplinear*
\begin{proof}
    We can write $f_\vtheta(\vx_n)_r= \mW \vx_{n, r}$ and hence
    $\vb_{\ell, n, r, m} = \mathbf{0}$ for $r \neq m$.
    We have $\mLambda(f_\vtheta(\vx_n)_r) = \mSigma^{-1}$ and
    $\vb_{\ell, n, r, r} = \mW_\ell^{a^\transpose}$.
    Hence, $\vb_{\ell, n, m, m} \mLambda(f_\vtheta(\vx_n)_m) \vb_{\ell, n, m, m}^\transpose = \mW_\ell^{a^\transpose} \mSigma^{-1} \mW_\ell^a$
    for all $n \in \{1, \dots, N\}$ and $m \in \{1, \dots, R\}$.
    Therefore, the desired result follows from \Cref{lem:cond_kfac_expand}.
\end{proof}

\subsubsection{The reduce setting and K-FAC-reduce}
\label{app:reduce}

The reduce setting is characterized by a loss with just $N$ loss terms, \ie
\begin{tcolorbox}[colframe=reduce, title={The Reduce Setting}]
\begin{equation}
    \label{eq:reduce}
    \mathcal{L}_{\mathrm{reduce}}(f_\vtheta, \mathcal{D}) := - \sum_{n=1}^N \log p(\vy_n | f_\vtheta(\vx_n)),
\end{equation}
\end{tcolorbox}
where the crucial observation is that the weight-sharing dimension must have been reduced somewhere in the forward pass of the neural network $f_\vtheta.$
A typical instance where this type of loss is used together with a model with linear weight-sharing layers is image classification with a vision transformer or a convolutional neural network.
Note, that the inputs $\vx_n$ and labels $\vy_n$ do not have a weight-sharing dimension here;
in general, it is also possible for the inputs to have this additional dimension of size $R$ already.

Since $\mA_{\ell, n} \in \R^{R \times P_{\ell, \mathrm{in}}}$ is now a matrix, we have $\mS_{\ell, n} = \mA_{\ell, n} \mW_\ell^\transpose  \in \R^{R \times P_{\ell, \mathrm{out}}}.$
Hence, $\mJ_{\vtheta_\ell} \mS_{\ell, n}$ and $\mJ_{\mS_{\ell, n}} f_\vtheta(\vx_n)$ are now both multi-dimensional arrays.
Luckily, we can simplify this by writing
\begin{equation}
  \left(\mJ_{\vtheta_\ell} f_\vtheta(\vx_n)\right)_{ij} = \sum_{r=1}^R \sum_{p=1}^{P_{\ell, \mathrm{out}}} \frac{\partial f_\vtheta(\vx_n)_i}{\partial \mS_{\ell, n, rp}} \frac{\partial \mS_{\ell, n, rp}}{\partial \vtheta_{\ell, j}},
\end{equation}
or in matrix form
\begin{equation}
  \mJ_{\vtheta_\ell} f_\vtheta(\vx_n) = \sum_{r=1}^R \mJ_{\vs_{\ell, n, r}} f_\vtheta(\vx_n) \mJ_{\vtheta_\ell} \vs_{\ell, n, r},
\end{equation}
where $\vs_{\ell, n, r} \in \R^{P_{\ell, \mathrm{out}}}$ is the $r$-th row of
$\mS_{\ell, n}$ and $\vs_{\ell, n, r} = \mW_\ell \va_{\ell, n, r}$.

Using this equivalence we can approximate the GGN for layer $\ell$ as
\begin{equation}
\label{eq:kfac_reduce_derivation}
\begin{split}
    \GGN(\vtheta_\ell) &= \sum_{n=1}^N \mJ_{\vtheta_\ell}(\vx_n)^\transpose \mLambda(f_\vtheta(\vx_n)) \mJ_{\vtheta_\ell}(\vx_n) \\
    &= \sum_{n=1}^N \left( \sum_{r=1}^R \mJ_{\vs_{\ell, n, r}} f_\vtheta(\vx_n) \mJ_{\vtheta_\ell} \vs_{\ell, n, r} \right)^\transpose \mLambda(f_\vtheta(\vx_n)) \left( \sum_{r=1}^R \mJ_{\vs_{\ell, n, r}} f_\vtheta(\vx_n) \mJ_{\vtheta_\ell} \vs_{\ell, n, r} \right) \\
    &= \sum_{n=1}^N \left( \sum_{r=1}^R \va_{\ell, n, r} \otimes \vb_{\ell, n, r} \right) \mLambda(f_\vtheta(\vx_n)) \left( \sum_{r=1}^R \va_{\ell, n, r} \otimes \vb_{\ell, n, r} \right)^\transpose \\
    &\approx \sum_{n=1}^N \underbrace{\left[ \frac{1}{R} \sum_{r=1}^R \va_{\ell, n, r} \right]}_{=: \bm{{\hat{a}}}_{\ell, n}} \otimes \underbrace{\left[ \sum_{r=1}^R \vb_{\ell, n, r} \right]}_{=: \bm{{\hat{b}}}_{\ell, n}} \mLambda(f_\vtheta(\vx_n)) \left[ \frac{1}{R} \sum_{r=1}^R \va_{\ell, n, r}^\transpose \right] \otimes \left[ \sum_{r=1}^R \vb_{\ell, n, r}^\transpose \right] \\
    &= \sum_{n=1}^N \left( \bm{{\hat{a}}}_{\ell, n} \bm{{\hat{a}}}_{\ell, n}^\transpose \right) \otimes \left( \bm{{\hat{b}}}_{\ell, n} \mLambda(f_\vtheta(\vx_n)) \bm{{\hat{b}}}_{\ell, n}^\transpose \right) \\
    &\approx \underbrace{\left[ \frac{1}{N} \sum_{n=1}^N \bm{{\hat{a}}}_{\ell, n} \bm{{\hat{a}}}_{\ell, n}^\transpose \right]}_{=: \bm{{\hat{A}}}_\ell} \otimes \underbrace{\left[ \sum_{n=1}^N \bm{{\hat{b}}}_{\ell, n} \mLambda(f_\vtheta(\vx_n)) \bm{{\hat{b}}}_{\ell, n}^\transpose \right]}_{=: \bm{{\hat{B}}}_\ell},
\end{split}
\end{equation}
where we have approximated the sum over the $R$ Kronecker products with a Kronecker product of sums for each of the $N$ per-input Jacobians, before applying the same type of approximation as usual to the sum over the $N$ data points.
This approximation has been proposed in \citet{tang2021skfac} to improve the efficiency of their proposed K-FAC variation (SKFAC) for convolutions by reducing the spatial dimension, purely based on the empirical observation that it works well in practice.
The idea to approximate the Jacobians within the GGN with a Kronecker-product has also been proposed in the context of invariance learning with deep neural networks via differentiable Laplace approximations in \citet{immer2022invariance}.
We call the approximation in \Cref{eq:kfac_reduce_derivation} \emph{K-FAC-reduce} and to highlight the difference to K-FAC-expand, we can rewrite it as
\begin{tcolorbox}[colframe=reduce, title={K-FAC-reduce}]
\begin{equation}
\label{eq:kfac_reduce_app}
\begin{split}
    &\bm{{\hat{\mathrm{GGN}}}}_{\vtheta_\ell}^{\mathrm{reduce}} := \\
    &\underbrace{\left[ \frac{1}{NR^2} \sum_{n=1}^N \left(\sum_{r=1}^R \va_{\ell, n, r} \right) \left( \sum_{r=1}^R \va_{\ell, n, r}^\transpose \right) \right]}_{= \bm{{\hat{A}}}_\ell} \otimes  \underbrace{\left[\sum_{n=1}^N \left( \sum_{r=1}^R \vb_{\ell, n, r} \right) \mLambda(f_\vtheta(\mX_n)) \left( \sum_{r=1}^R \vb_{\ell, n, r}^\transpose \right) \right]}_{= \bm{{\hat{B}}}_\ell}.
\end{split}
\end{equation}
\end{tcolorbox}
As for K-FAC-expand, we want to show that this approximation can be exact in the case of a single layer or a deep linear network and a Gaussian likelihood.
First, we state an analogous condition to \Cref{lem:cond_kfac_expand}.
\begin{lemma}[\textbf{Sufficient condition for exactness of K-FAC-reduce in the reduce setting}]
\label{lem:cond_kfac_reduce}
    Let $\mD_{\ell, n} \in \R^{P_{\ell, \mathrm{out}} \times C}$ be a
    constant matrix for layer $\ell$ and data point $\vx_n$.
    Further, let $\mC_\ell \in \R^{P_{\ell, \mathrm{out}} \times P_{\ell, \mathrm{out}}}$ be
    a constant matrix for layer $\ell$.
    If it holds for each $n$ that $\vb_{\ell, n, r} = \mD_{\ell, n}$ for all
    $r \in \{1, \dots, R\}$ and
    $\bm{{\hat{b}}}_{\ell, n} \mLambda(f_\vtheta(\vx_n)) \bm{{\hat{b}}}_{\ell, n}^\transpose = \mC_\ell$ for
    all $n \in \{1, \dots, N\}$, then the K-FAC-reduce approximation in
    \Cref{eq:kfac_reduce} is equal to the exact GGN of the $\ell$-th layer in the
    reduce setting.
\end{lemma}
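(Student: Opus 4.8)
The structure of the argument mirrors the proof of \Cref{lem:cond_kfac_expand}. K-FAC-reduce is obtained from the exact GGN in \Cref{eq:kfac_reduce_derivation} through two successive approximations: first the inner sum of $R$ Kronecker products (per data point) is replaced by a single Kronecker product of sums, and then the outer sum of $N$ Kronecker products is replaced by $\bm{{\hat{A}}}_\ell \otimes \bm{{\hat{B}}}_\ell$. The two hypotheses of the lemma are each tailored to one of these steps, so the plan is to start from the exact GGN and verify that, under the assumptions, \emph{both} approximations in \Cref{eq:kfac_reduce_derivation} in fact hold with equality.

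First I would address the inner ($R$-)approximation. Since by hypothesis $\vb_{\ell, n, r} = \mD_{\ell, n}$ does not depend on $r$, bilinearity of the Kronecker product gives $\sum_{r=1}^R \va_{\ell, n, r} \otimes \vb_{\ell, n, r} = \left( \sum_{r=1}^R \va_{\ell, n, r} \right) \otimes \mD_{\ell, n}$. On the other hand $\bm{{\hat{a}}}_{\ell, n} = \frac{1}{R} \sum_{r=1}^R \va_{\ell, n, r}$ while $\bm{{\hat{b}}}_{\ell, n} = \sum_{r=1}^R \vb_{\ell, n, r} = R \mD_{\ell, n}$, so $\bm{{\hat{a}}}_{\ell, n} \otimes \bm{{\hat{b}}}_{\ell, n}$ equals the very same quantity, the $\frac{1}{R}$ being absorbed by the factor $R$ hidden in $\bm{{\hat{b}}}_{\ell, n}$. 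Hence the first $\approx$ in \Cref{eq:kfac_reduce_derivation} is an equality, and the mixed-product identity $(\mA \otimes \mB)(\mC \otimes \mD) = \mA\mC \otimes \mB\mD$ applied exactly as in \Cref{eq:kfac_reduce_derivation} reduces each per-example term to $(\bm{{\hat{a}}}_{\ell, n} \bm{{\hat{a}}}_{\ell, n}^\transpose) \otimes (\bm{{\hat{b}}}_{\ell, n} \mLambda(f_\vtheta(\vx_n)) \bm{{\hat{b}}}_{\ell, n}^\transpose)$, yielding $\GGN(\vtheta_\ell) = \sum_{n=1}^N (\bm{{\hat{a}}}_{\ell, n} \bm{{\hat{a}}}_{\ell, n}^\transpose) \otimes (\bm{{\hat{b}}}_{\ell, n} \mLambda(f_\vtheta(\vx_n)) \bm{{\hat{b}}}_{\ell, n}^\transpose)$.

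Next I would invoke the second hypothesis $\bm{{\hat{b}}}_{\ell, n} \mLambda(f_\vtheta(\vx_n)) \bm{{\hat{b}}}_{\ell, n}^\transpose = \mC_\ell$, which makes the right Kronecker factor independent of $n$. Pulling $\mC_\ell$ out of the sum gives $\GGN(\vtheta_\ell) = \left( \sum_{n=1}^N \bm{{\hat{a}}}_{\ell, n} \bm{{\hat{a}}}_{\ell, n}^\transpose \right) \otimes \mC_\ell$. Comparing with K-FAC-reduce, the same hypothesis gives $\bm{{\hat{B}}}_\ell = \sum_{n=1}^N \mC_\ell = N \mC_\ell$, so $\bm{{\hat{A}}}_\ell \otimes \bm{{\hat{B}}}_\ell = \left( \frac{1}{N} \sum_{n=1}^N \bm{{\hat{a}}}_{\ell, n} \bm{{\hat{a}}}_{\ell, n}^\transpose \right) \otimes (N \mC_\ell) = \left( \sum_{n=1}^N \bm{{\hat{a}}}_{\ell, n} \bm{{\hat{a}}}_{\ell, n}^\transpose \right) \otimes \mC_\ell$, where the $\frac{1}{N}$ cancels the $N$. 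This coincides with the exact GGN, which closes the argument.

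The computation is entirely routine Kronecker algebra, so there is no deep obstacle; the one point genuinely requiring care is the bookkeeping of the normalisation constants. Specifically, I would double-check that the $\frac{1}{R}$ in $\bm{{\hat{a}}}_{\ell, n}$ is exactly cancelled by the implicit factor $R$ in $\bm{{\hat{b}}}_{\ell, n}$ in the inner step, and likewise that the $\frac{1}{N}$ in $\bm{{\hat{A}}}_\ell$ is cancelled by the factor $N$ in $\bm{{\hat{B}}}_\ell$ in the outer step, since getting either constant wrong would destroy the claimed exactness.
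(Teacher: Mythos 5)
Your proposal is correct and follows essentially the same route as the paper's proof: both arguments show that, under the first hypothesis, the per-example sum of Kronecker products collapses exactly to $\bm{{\hat{a}}}_{\ell, n} \otimes \bm{{\hat{b}}}_{\ell, n}$ (the $\sfrac{1}{R}$ cancelling against the factor $R$ in $\bm{{\hat{b}}}_{\ell, n}$), and that, under the second hypothesis, pulling the constant $\mC_\ell$ out of the sum over $n$ makes $\bm{{\hat{A}}}_\ell \otimes \bm{{\hat{B}}}_\ell$ coincide with the exact GGN (the $\sfrac{1}{N}$ cancelling against $N$). The only cosmetic difference is direction: you proceed forward from the exact GGN, while the paper verifies each approximation step is an equality starting from the approximated expressions; the content is identical.
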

\begin{proof}
    We start with the first approximation and derive the exactness of this step
    under our assumptions.
    We have
    \begin{equation}
    \begin{split}
        &\sum_{n=1}^N \left( \frac{1}{R} \sum_{r=1}^R \va_{\ell, n, r} \otimes \sum_{r=1}^R \vb_{\ell, n, r} \right) \mLambda(f_\vtheta(\vx_n)) \left( \frac{1}{R} \sum_{r=1}^R \va_{\ell, n, r} \otimes \sum_{r=1}^R \vb_{\ell, n, r} \right)^\transpose \\
        = &\sum_{n=1}^N \left( \frac{1}{R} \sum_{r=1}^R \va_{\ell, n, r} \otimes R \mD_{\ell, n} \right) \mLambda(f_\vtheta(\vx_n)) \left( \frac{1}{R} \sum_{r=1}^R \va_{\ell, n, r} \otimes R \mD_{\ell, n} \right)^\transpose \\
        %= &\sum_{n=1}^N \left( \sum_{r=1}^R \va_{\ell, n, r} \otimes \mD_{\ell, n} \right) \mLambda(f_\vtheta(\vx_n)) \left( \sum_{r=1}^R \va_{\ell, n, r} \otimes \mD_{\ell, n} \right)^\transpose \\
        = &\sum_{n=1}^N \left( \sum_{r=1}^R \va_{\ell, n, r} \otimes \vb_{\ell, n, r} \right) \mLambda(f_\vtheta(\vx_n))\left( \sum_{r=1}^R \va_{\ell, n, r} \otimes \vb_{\ell, n, r} \right)^\transpose,
    \end{split}
    \end{equation}
    where we have used the assumption that for each $n$, we have
    $\vb_{\ell, n, r} = \mD_{\ell, n}$ for all $r \in \{1, \dots, R\}$.
    Now we consider the second approximation in
    \Cref{eq:kfac_reduce_derivation}.
    Analogously, we have
    \begin{equation}
    \begin{split}
        &\left( \frac{1}{N} \sum_{n=1}^N \bm{{\hat{a}}}_{\ell, n} \bm{{\hat{a}}}_{\ell, n}^\transpose \right) \otimes \left( \sum_{n=1}^N \bm{{\hat{b}}}_{\ell, n} \mLambda(f_\vtheta(\vx_n)) \bm{{\hat{b}}}_{\ell, n}^\transpose \right) \\
        = &\left( \frac{1}{N} \sum_{n=1}^N \bm{{\hat{a}}}_{\ell, n} \bm{{\hat{a}}}_{\ell, n}^\transpose \right) \otimes N \mC_\ell  \\
        %= &\left( \sum_{n=1}^N \bm{{\hat{a}}}_{\ell, n} \bm{{\hat{a}}}_{\ell, n}^\transpose \right) \otimes \mC_\ell \\
        = &\sum_{n=1}^N \left( \bm{{\hat{a}}}_{\ell, n} \bm{{\hat{a}}}_{\ell, n}^\transpose \right) \otimes \left( \bm{{\hat{b}}}_{\ell, n} \mLambda(f_\vtheta(\vx_n)) \bm{{\hat{b}}}_{\ell, n}^\transpose \right),
    \end{split}
    \end{equation}
    where we have used that
    $\bm{{\hat{b}}}_{\ell, n} \mLambda(f_\vtheta(\vx_n)) \bm{{\hat{b}}}_{\ell, n}^\transpose = \mC_\ell$ for
    all $n \in \{1, \dots, N\}$.
\end{proof}

Until now, we did not have to explicitly take the aggregation function $z: \R^{R \times P_{\ell, \mathrm{out}}} \rightarrow \R^{P_{\ell, \mathrm{out}}}$ into account, since its Jacobian is simply subsumed in $\vb_{\ell, n, r}$.
Since we want to verify that the approximation in the reduce case is also exact in the simple scenarios from \Cref{prop:expand_deep_linear} and \Cref{prop:reduce_deep_linear}, we now have to also check if the Jacobian $\mJ_{\vs_{\ell, n, r}} \vz_{\ell, n}$ with $\vz_{\ell, n} := z(\mS_{\ell, n}) \in \R^{P_{\ell, \mathrm{out}}}$ is the same for all $r \in \{1, \dots, R\}$, to make sure the first condition in \Cref{lem:cond_kfac_reduce} is fulfilled.
Maybe the simplest case where this holds is a scaled sum, \ie
\begin{equation}
\label{eq:scaled_sum}
\begin{split}
  z(\mS_{\ell, n}) &= c \sum_{r=1}^R \vs_{\ell, n, r} \\
  &= c \mS_{\ell, n}^\transpose \mathbf{1}_R \\
  &= \left(\mathbf{1}_R^\transpose \otimes c \mathbf{I}_{P_{\ell, \mathrm{out}}} \right) \vec(\mS_{\ell, n}^\transpose) \\
  &= \left(\mathbf{1}_R^\transpose \otimes c \mathbf{I}_{P_{\ell, \mathrm{out}}} \right) \mK^{(R, P_{\ell, \mathrm{out}})} \vec(\mS_{\ell, n}) \\
\end{split}
\end{equation}
with $c \in \R$ and the commutation matrix
\begin{equation}
  \mK^{(R, P_{\ell, \mathrm{out}})} := \sum_{r=1}^R \sum_{p=1}^{P_{\ell, \mathrm{out}}} (\ve_{R, r} \xspace \ve_{P_{\ell, \mathrm{out}}, p}^\transpose) \otimes (\ve_{P_{\ell, \mathrm{out}}, p} \xspace \ve_{R, r}^\transpose),
\end{equation}
where $\ve_{i,j}$ is the $j$-th canonical vector of dimension $i$.
This is a linear function in $\vec(\mS_{\ell, n})$ and we have $\mJ_{\vs_{\ell, n, r}} \vz_{\ell, n} = c \mathbf{I}_{P_{\ell, \mathrm{out}}}$ for all $r \in \{1, \dots, R\}$.
In particular, when $c=1$ the aggregation function is a simple sum and when $c = 1 / R$ it is the mean.
Notably, it is \emph{not} sufficient for $z$ to be linear in $\vec(\mS_{\ell, n})$, because as soon as we have a weighted sum with weights $c_r \in \R$ and they are not the same for all $r \in \{1, \dots, R\}$, the Jacobians $\mJ_{\vs_{\ell, n, r}} \vz_{\ell, n}$ will also not be the same anymore.
Both vision transformers and convolutional neural networks with average pooling use scaled sums as the aggregation function (with $c\!=\!\sfrac{1}{R}$).

After clarifying the role of the aggregation function in the exactness of K-FAC-reduce, we can now state a similar statement to \Cref{prop:expand_single_layer}.
\begin{proposition}[\textbf{Exactness of K-FAC-reduce for single linear layer in the reduce setting}]
\label{prop:reduce_single_layer}
  For a single linear layer, a Gaussian likelihood with \pd covariance
  matrix $\mSigma \in \R^{C \times C}$, and a scaled sum as defined in
  \Cref{eq:scaled_sum} as the aggregation function applied to the output of the
  linear function, K-FAC-reduce is exact in the reduce setting.
\end{proposition}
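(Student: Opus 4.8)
The plan is to verify the two hypotheses of \Cref{lem:cond_kfac_reduce}, exactly mirroring how \Cref{prop:expand_single_layer} is obtained from \Cref{lem:cond_kfac_expand}. For a single linear weight-sharing layer the scaled-sum aggregation is applied directly to the pre-activations, so the model output is $f_\vtheta(\vx_n) = z(\mS_{\ell, n}) = c \sum_{r=1}^R \vs_{\ell, n, r}$ with $\vs_{\ell, n, r} = \mW_\ell \va_{\ell, n, r}$ and $P_{\ell, \mathrm{out}} = C$. Everything then reduces to computing the two building blocks $\vb_{\ell, n, r}$ and $\bm{{\hat{b}}}_{\ell, n}$ and checking that they have the required constancy.

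First I would compute $\vb_{\ell, n, r} = \mJ_{\vs_{\ell, n, r}} f_\vtheta(\vx_n)^\transpose$. Because the aggregation is a scaled sum, the Jacobian of the output with respect to each row of the pre-activation is $c \mathbf{I}_{P_{\ell, \mathrm{out}}}$; this is precisely the computation already carried out in \Cref{eq:scaled_sum}, where $\mJ_{\vs_{\ell, n, r}} \vz_{\ell, n} = c \mathbf{I}_{P_{\ell, \mathrm{out}}}$ was established for all $r$. Hence $\vb_{\ell, n, r} = c \mathbf{I}_C$, which is independent of $r$ (and indeed of $n$), so the first hypothesis of \Cref{lem:cond_kfac_reduce} holds with $\mD_{\ell, n} = c \mathbf{I}_C$.

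Next I would verify the second hypothesis. Summing over the weight-sharing dimension gives $\bm{{\hat{b}}}_{\ell, n} = \sum_{r=1}^R \vb_{\ell, n, r} = R c \mathbf{I}_C$, and for a Gaussian likelihood with covariance $\mSigma$ the loss Hessian with respect to the output is $\mLambda(f_\vtheta(\vx_n)) = \mSigma^{-1}$, independent of the data point. Therefore $\bm{{\hat{b}}}_{\ell, n} \mLambda(f_\vtheta(\vx_n)) \bm{{\hat{b}}}_{\ell, n}^\transpose = R^2 c^2 \mSigma^{-1}$, which is the same for every $n$, so the second hypothesis holds with $\mC_\ell = R^2 c^2 \mSigma^{-1}$. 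With both conditions in hand, \Cref{lem:cond_kfac_reduce} immediately yields exactness.

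I do not expect a genuine obstacle here: the scaled-sum assumption is exactly what forces $\mJ_{\vs_{\ell, n, r}} \vz_{\ell, n}$ to be the same constant $c \mathbf{I}$ across $r$, and the Gaussian likelihood is what makes $\mLambda$ constant in $n$. The one point requiring care is to recognise that for a single layer the aggregation map \emph{is} the output map, so its Jacobian is what enters $\vb_{\ell, n, r}$ rather than appearing as a separate factor; this is also why a general aggregation that is merely linear in $\vec(\mS_{\ell, n})$ would not suffice, whereas the scaled sum, with its equal weights across $r$, does.
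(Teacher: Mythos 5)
Your proposal is correct and follows essentially the same route as the paper's proof: both verify the two hypotheses of \Cref{lem:cond_kfac_reduce} by computing $\vb_{\ell, n, r} = c \mathbf{I}_C$ (using that for a single layer the scaled-sum aggregation is the output map, so $\mJ_{\vs_{\ell, n, r}} \vz_{\ell, n} = c\mathbf{I}_{P_{\ell,\mathrm{out}}}$ with $P_{\ell, \mathrm{out}} = C$) and then $\bm{{\hat{b}}}_{\ell, n} \mLambda(f_\vtheta(\vx_n)) \bm{{\hat{b}}}_{\ell, n}^\transpose = c^2 R^2 \mSigma^{-1}$, constant in $n$. The constants $\mD_{\ell,n}$ and $\mC_\ell$ you exhibit match the paper's exactly, so nothing is missing.
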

\begin{proof}
    We have $\mLambda(f_\vtheta(\vx_n)) = \mSigma^{-1}$ and
    $\vb_{\ell, n, r} = \left( \mJ_{\vz_{\ell, n}} f_\vtheta(\vx_n) \mJ_{\vs_{\ell, n, r}} \vz_{\ell, n} \right)^\transpose = c \mathbf{I}_C$
    for all $r \in \{1, \dots, R\}$ and $n \in \{1, \dots, N\}$ ($P_{\ell, \mathrm{out}} = C$ for a single layer).
    Hence, $\bm{{\hat{b}}}_{\ell, n} \mLambda(f_\vtheta(\vx_n)) \bm{{\hat{b}}}_{\ell, n}^\transpose= c^2 R^2 \mathbf{I}_C \mSigma^{-1} \mathbf{I}_C = c^2 R^2 \mSigma^{-1}$
    for all $n \in \{1, \dots, N\}.$
    Therefore, the desired result follows from \Cref{lem:cond_kfac_reduce}.
\end{proof}
Just as for K-FAC-expand, we can extend this result to deep linear networks.
\reducedeeplinear*
\begin{proof}
    We have $\mLambda(f_\vtheta(\vx_n)_r) = \mSigma^{-1}$ and
    $\vb_{\ell, n, r} = \left( \mJ_{\vz_{\ell, n}} f_\vtheta(\vx_n) \mJ_{\vs_{\ell, n, r}} \vz_{\ell, n} \right)^\transpose = c \mW_\ell^{a^\transpose}$
    for all $r \in \{1, \dots, R\}$ and $n \in \{1, \dots, N\}$.
    Hence,
    \begin{equation*}
        \bm{{\hat{b}}}_{\ell, n} \mLambda(f_\vtheta(\vx_n)) \bm{{\hat{b}}}_{\ell, n}^\transpose = c^2 R^2 \mW_\ell^{a^\transpose} \mSigma^{-1} \mW_\ell^{a}
    \end{equation*}
    for all $n \in \{1, \dots, N\}.$
    Therefore, the desired result follows from \Cref{lem:cond_kfac_reduce}.
\end{proof}

To summarise, the difference between the expand and the reduce setting is at what point the aggregation over the additional weight-sharing dimension happens.
If this dimension is not aggregated before the per-example loss, \ie if the loss can be expanded to $N \cdot R$ instead of $N$ terms, we call it the expand setting.
If the aggregation happens inside the model, we call it the reduce setting.
Both settings motivate an approximation each, K-FAC-expand and K-FAC-reduce.
Moreover, we presented simple cases where the approximations are exact.
In \Cref{fig:visual_expand} and \Cref{fig:visual_reduce} we verify this numerically, and also show that using the inappropriate approximation results in an inexact computation.
\begin{remark}
  In practice, however, both approximations can be applied in each of the two settings.
\end{remark}
%
% This point becomes even more relevant as the computational complexity of the calculation of the Kronecker factors also differs between the two approximations:
% while the calculation of $\mA_\ell$ costs $\mathcal{O}(NRP_{\ell, \mathrm{in}}^2)$ for K-FAC-expand, it is reduced to
% $\mathcal{O}(NP_{\ell, \mathrm{in}}^2 + NRP_{\ell, \mathrm{in}})$ for $\bm{{\hat{A}}}_\ell$ in the case of K-FAC-reduce.
% The same holds for $\mB_\ell$ and $\bm{{\hat{B}}}_\ell,$ since the complexity is reduced from $\mathcal{O}(NRP_{\ell, \mathrm{out}}C^2 + NRP_{\ell, \mathrm{out}}^2C)$ to $\mathcal{O}(NP_{\ell, \mathrm{out}}C^2 + NP_{\ell, \mathrm{out}}^2C + NRP_{\ell, \mathrm{out}}C).$

\subsection{Examples of K-FAC for linear weight-sharing layers in the wild}
\label{app:examples}

\subsubsection{K-FAC for self-attention}
\label{app:kfac_attention}

While we have mentioned (vision) transformers for translation and image classification as prototypical examples for the expand and the reduce setting, we have mostly ignored how linear weight-sharing layers are used within the architecture and how this affects the approximation quality of K-FAC-expand and K-FAC-reduce.
Linear weight-sharing layers are crucial for the self-attention mechanism in \Cref{subsec:weight_sharing}.
To gain some intuition for models using this type of attention mechanism, we look at a network that only consists of one simplified variation of the self-attention mechanism used in transformers (we ignore the softmax function, but also consider a linear projection of the output with weight matrix $\mW^V$), \ie
\begin{equation}
\label{eq:linear_self_attention}
  f_\vtheta(\mX_n) = \underbrace{\mX_n \mW^{Q^\transpose}}_{=: \mS_{Q, n}} \underbrace{\mW^K \mX_n^\transpose}_{=: \mS_{K, n}^\transpose} \underbrace{\mX_n \mW^{V^\transpose}}_{=: \mS_{V, n}}.
\end{equation}
We can observe that it is no longer a linear function in the input $\mX_n \in \R^{R \times D}$ and that we have three linear weight-sharing layers involved in this operation.
First, we consider the expand setting, \ie the output $f_\vtheta(\mX_n)$ is not reduced before the loss is applied.

\textbf{Simplified self-attention in the expand setting.}
Since we want to understand if K-FAC-expand can be exact in this case, we first derive the Jacobians appearing in the derivation of K-FAC-expand in \Cref{eq:kfac_expand_derivation} for all three involved layers, \ie
$\mJ_{\vs_{Q, n, m}} f_\vtheta(\mX_n)_r$ for the layer with weights $\mW^Q$,
$\mJ_{\vs_{K, n, m}} f_\vtheta(\mX_n)_r$ for the layer with weights $\mW^K$,
and $\mJ_{\vs_{V, n, m}} f_\vtheta(\mX_n)_r$ for the layer with weights $\mW^V$.

We can simply write the $r$-th row of the output of the layer with the weight matrix $\mW^Q$ as a function of $\vs_{Q, n, r}$ as
\begin{equation}
  f_\vtheta(\mX_n)_r = \vs_{Q, n, r}^\transpose \mS_{K, n}^\transpose \mS_{V, n}.
\end{equation}
Therefore, we have
\begin{equation}
  \mJ_{\vs_{Q, n, r}} f_\vtheta(\mX_n)_r = \mS_{V, n}^\transpose \mS_{K, n} = \vb_{Q, n, r}^\transpose \in \R^{C \times P_{K, \mathrm{out}}},
\end{equation}
with $C = P_{V, \mathrm{out}}$ and $\vb_{Q, n, r, m} = \mathbf{0}$ for all $m \neq r$, which is the first assumption necessary for \Cref{lem:cond_kfac_expand} to hold.
While $\vb_{Q, n, r}$ is not the same for all $n \in \{1, \dots, N\}$, it is the same for all $r \in \{1, \dots, R\}$ and hence, under the same assumptions as in \Cref{prop:expand_single_layer}, K-FAC-expand is exact for the layer with weights $\mW^Q$ in the special case of a single data point, $N=1$.

For the other two involved linear layers, we cannot express the $r$-th row of $f_\vtheta(\mX_n)$ as a function of the $r$-th row of $\mS_{K/V, n}$, \ie elements from all rows of $\mS_{K/V, n}$ contribute to the $r$-th row of the output matrix.
We can also see this by directly deriving $\mJ_{\vs_{K/V, n, m}} f_\vtheta(\mX_n)_r$ which will be generally non-zero and dependent on $r$ and $m$.
We omit the explicit derivation by taking the partial derivatives and directly state the results.
For the second layer with weight matrix $\mW^K$, we have
\begin{equation}
  \mJ_{\vs_{K, n, m}} f_\vtheta(\mX_n)_r = \vs_{V, n, m} \vs_{Q, n, r}^\transpose \in \R^{C \times P_{Q, \mathrm{out}}}.
\end{equation}
Moreover, for the third layer with weight matrix $\mW^V$, we have
\begin{equation}
  \mJ_{\vs_{V, n, m}} f_\vtheta(\mX_n)_r = \vs_{Q, n, r}^\transpose \vs_{K, n, m} \mathbf{I}_C \in \R^{C \times C}.
\end{equation}
This means that the assumption of \Cref{lem:cond_kfac_expand} that the $R$ elements along the weight-sharing dimension are independent does not hold, since the Jacobians depend on $r$ and $m$.
The approximation leads to an inexact computation, even though only linear layers are involved and a Gaussian likelihood is used.
Similarly, we can inspect the corresponding reduce case.

\textbf{Simplified self-attention in the reduce setting.}
Assuming we use a scaled sum $z$ with factor $c$ as the aggregation function, we can further rewrite the Jacobians occurring in \Cref{eq:kfac_reduce_derivation} as
\begin{equation}
\begin{split}
  \mJ_{\vtheta_\ell} z(f_\vtheta(\mX_n)) &= \sum_{r=1}^R \mJ_{\vs_{\ell, n, r}} z(f_\vtheta(\mX_n)) \mJ_{\vtheta_\ell} \vs_{\ell, n, r} \\
  &= \sum_{r=1}^R \va_{\ell, n, r} \otimes \vb_{\ell, n, r} \\
  &= \sum_{r=1}^R \va_{\ell, n, r} \otimes \left( c \sum_{m=1}^R \mJ_{\vs_{\ell, n, r}} f_\vtheta(\mX_n)_m \right),
\end{split}
\end{equation}
where $\mJ_{\vs_{\ell, n, r}} f_\vtheta(\mX_n)_m$ are the same Jacobians we have derived for the expand case.
Since according to \Cref{lem:cond_kfac_reduce} we need all $\vb_{\ell, n, r}$ to be the same for all $r \in \{1, \dots, R\}$ for the first approximation to be exact under the assumptions of \Cref{prop:reduce_single_layer}, K-FAC-reduce is only exact when $N=1$ and only for the layer with weights $\mW^Q$ -- just as K-FAC-expand in the expand setting.

When we extend this scenario to a network consisting of $L$ blocks as defined in \Cref{eq:linear_self_attention}, the above statements regarding the special case where K-FAC-expand and K-FAC-reduce are exact for the layer with weights $\mW^Q$ only hold for the \emph{last} block.
While we omit an explicit derivation, intuitively, this can be seen by the fact that we cannot rewrite the $r$-th row of this model's output as a function of only the $r$-th row of the layer's output $\mS_{\ell, Q, n}$ of all layers with weights $\mW_\ell^Q$, besides for the layer in the last block, \ie the layer in the $L$-th block with weights $\mW_L^Q$.

This shows that even without explicit nonlinear activation functions, the self-attention mechanism in transformer models breaks the two approximations.
Hence, it is not inherently clear how useful it is to consider the corresponding approximation in the expand and reduce setting.
This is especially relevant given that we know that the computational complexity of K-FAC-reduce is generally smaller than of K-FAC-expand and given the similar downstream optimisation performance we report in \Cref{sec:experiments}.

\subsubsection{K-FAC for GNNs}
\label{app:kfac_gnns}

Beyond transformers, we have introduced GNNs as a class of models that also use linear weight-sharing layers.
There are many types of GNNs and we will only explicitly cover two of them here.

\textbf{Related work: node classification with GCN.}
We first consider a GCN layer, described in \Cref{app:gnn}.
This specification of K-FAC for GNNs has been previously derived for semi-supervised node classification in \citet{izadi2020optimization} and we include it for completeness, since it is, to the best of our knowledge, the only case where K-FAC has been applied to a GNN.
The only difference to the normal derivation of K-FAC is that the inputs $\bm{{\tilde{a}}}_{\ell, n}$ to the $\ell$-th layer for the node with index $n$ now depend on its neighbourhood $\mathcal{N}(n)$, since
\begin{equation}
    \bm{{\tilde{a}}}_{\ell, n} := \sum_{j \in \mathcal{N}(n)} \bm{{\hat{C}}}_{nj} \va_{\ell, j} \in \R^{P_{\ell, \mathrm{in}}}.
\end{equation}
Using this notation, the definition of the K-FAC GGN for node classification is simply
\begin{equation}
\begin{split}
    GGN(\vtheta_\ell) &= \sum_{n=1}^N \mJ_{\vtheta_\ell}(\mX)_n^\transpose \mLambda(f_\vtheta(\mX)_n) \mJ_{\vtheta_\ell}(\mX)_n \\
    &\approx \underbrace{\big[ \frac{1}{N} \sum_{n=1}^N \bm{{\tilde{a}}}_{\ell, n} \bm{{\tilde{a}}}_{\ell, n}^\transpose \big]}_{=: \bm{{\tilde{A}}}_\ell} \otimes \underbrace{\big[ \sum_{n=1}^N \bm{{\tilde{b}}}_{\ell, n} \mLambda(f_\vtheta(\mX)_n) \bm{{\tilde{b}}}_{\ell, n}^\transpose \big]}_{=: \bm{{\tilde{B}}}_\ell},
\end{split}
\end{equation}
where $\mX \in \R^{N \times D}$, $\bm{{\tilde{b}}}_{\ell, n} := \mJ_{\bm{{\tilde{s}}}_{\ell, n}} f_{\vtheta_\ell}(\mX)_n^\transpose$, and $\bm{{\tilde{s}}}_{\ell, n} := \mW_\ell \bm{{\tilde{a}}}_{\ell, n}.$
Again, it is important to note that we need to have access to the whole neighbourhood of $\vx_n$ to be able to write the $n$-th term of the GGN, which is why the input to the model is the matrix $\mX$ containing all nodes for each loss term.
Also, depending on the sparsity of $\bm{{\hat{C}}}$, \ie the size of neighbourhoods, we might have multiple identical terms.
In the extreme case of all neighbourhoods being the same, \eg in the case of a fully connected graph, \ie are values of $\bm{{\hat{C}}}$ are the same, all terms of the GGN will be the same.

According to the first of the three cases in \Cref{subsec:base_cases}, we do not need to think in terms of the expand and reduce settings here -- as opposed to the case of the GraphNetwork we consider next, because we aggregate over each node's neighbourhood \emph{before} the forward pass through a linear layer.

\textbf{Graph classification with GraphNetwork.}
Now, we want to look at a more general architecture, an instance of the GraphNetwork introduced in \citet{battaglia2018relational} and described in \Cref{app:gnn}.
It is important to note that while the inputs and the GraphNetwork block structure look different from our standard input and linear layer, this case can be treated the same.
This architecture is therefore a good didactic example of how to apply the presented framework of thinking about K-FAC for linear weight-sharing layers to new model architectures.
In contrast to the original description in \citet{battaglia2018relational}, we already defined the inputs to a GraphNetwork block according to our definition of an input that leads to weight-sharing, \ie with an additional weight-sharing dimension of size $R.$
This is in fact the crucial step to be able to apply our framework in this setting.
As noted in \Cref{app:gnn}, the inputs cannot be trivially batched in this formulation.
This is not an issue for our derivation, but it requires special consideration in the implementation, which we will consider in \Cref{app:practical}.

First, we note that we consider the task of graph classification.
Hence, our loss has the same form as \Cref{eq:reduce}, which means that the weight-sharing dimensions have to be reduced at some point during the forward pass and we are in the reduce setting.
Notably, this is the setting of the ogbg-molpcba workload of the \texttt{AlgoPerf} benchmark from the experiment in \Cref{subsec:exp_gnn}.
Following this line of reasoning, we would simply have to apply the corresponding K-FAC approximation.
Since the inputs take a more complex form than in our description of the reduce case, we still have to adopt the notation from \Cref{app:gnn} to concretely write down the approximation.

To recap, the weight-sharing dimension of size $R_n$ of graph $\sX_n^G$ depends on the input graph itself (indicated by the index $n$) and which update function within a GraphNetwork block we want to derive K-FAC-reduce for.
For $\phi^E$ this dimension is going to be $R_n = N_n^E,$ whereas it will be $R_n = N_n^V$ for $\phi^V$.
In the case of $\phi^u$ we do not have a weight-sharing dimension, as it has been reduced before this layer is applied, and we can simply apply the regular K-FAC approximation.
We can define the inputs to layer $\ell$ of type $\phi^E$ as
\begin{equation}
\label{eq:node_inputs}
  \mA_{\ell, n} = \concat(\mX_n^E, \mX_{n, \vr_n}^V, \mX_{n, \vs_n}^V, \mathrm{repeat}_{N_n^E}(\vx_n^u)) \in \R^{N_n^E \times (D_E + 2D_V + D_u)}   
\end{equation}
and as
\begin{equation}
\label{eq:edge_inputs}
  \mA_{\ell, n} = \concat(\mX_n^V, \bm{\tilde{X}}_n^E, \mathrm{repeat}_{N_n^V}(\vx_n^u)) \in \R^{N_n^V \times (D_V + D_E + D_u)}   
\end{equation}
for $\phi^V$.
Correspondingly, we have $\vb_{\ell, n} = \mJ_{\mS_{\ell, n}}f_\vtheta(\sX_n^G)^\transpose \in \R^{N_n^E \times D_E \times C}$ for $\phi^E$ and $\vb_{\ell, n} = \mJ_{\mS_{\ell, n}}f_\vtheta(\sX_n^G)^\transpose \in \R^{N_n^V \times D_V \times C}$, with $\mS_{\ell, n} = \mA_{\ell, n} \mW_\ell^{E^\transpose} \in \R^{N_n^E \times D_E}$ and $\mS_{\ell, n} = \mA_{\ell, n} \mW_\ell^{V^\transpose} \in \R^{N_n^V \times D_V}$, respectively.

Using this notation, we can approximate the GGN for layer $\ell$, assuming its type is either $\phi^E$ or $\phi^V$, as
\begin{equation}
\label{eq:gn_kfac}
\begin{split}
    &\GGN(\vtheta_\ell) = \sum_{n=1}^N \mJ_{\vtheta_\ell}(\sX_n^G)^\transpose \mLambda(f_\vtheta(\sX_n^G)) \mJ_{\vtheta_\ell}(\sX_n^G) \\
    %&= \sum_{n=1}^N \big( \sum_{r=1}^{R_n} \mJ_{\vtheta_\ell}(\vx_{n, r}^G) \big)^\transpose \mLambda(f_\vtheta(\sX_n^G)) \big( \sum_{r=1}^{R_n} \mJ_{\vtheta_\ell}(\vx_{n, r}^G) \big) \\
    %&= \sum_{n=1}^N \left( \sum_{r=1}^{R_n} \va_{\ell, n, r} \otimes \vb_{\ell, n, r} \right) \mLambda(f_\vtheta(\sX_n^G)) \left( \sum_{r=1}^{R_n} \va_{\ell, n, r} \otimes \vb_{\ell, n, r} \right)^\transpose \\
    &\approx \sum_{n=1}^N \! \underbrace{ \left[ \sum_{r=1}^{R_n} \! \frac{1}{\sqrt{R_n}} \va_{\ell, n, r} \right]}_{=: \bm{{\hat{a}}}_{\ell, n}} \! \otimes  \! \underbrace{ \left[ \sum_{r=1}^{R_n} \! \frac{1}{\sqrt{R_n}} \vb_{\ell, n, r} \right]}_{=: \bm{{\hat{b}}}_{\ell, n}} \! \mLambda(f_\vtheta(\sX_n^G)) \! \left[ \sum_{r=1}^{R_n} \! \frac{1}{\sqrt{R_n}} \va_{\ell, n, r} \right] \! \otimes \! \left[ \sum_{r=1}^{R_n} \! \frac{1}{\sqrt{R_n}} \vb_{\ell, n, r} \right]^\transpose \\
    %&= \sum_{n=1}^N \left( \bm{{\hat{a}}}_{\ell, n} \bm{{\hat{a}}}_{\ell, n}^\transpose \right) \otimes \left( \bm{{\hat{b}}}_{\ell, n} \mLambda(f_\vtheta(\sX_n^G)) \bm{{\hat{b}}}_{\ell, n}^\transpose \right) \\
    &\approx \underbrace{\left[ \frac{1}{N} \sum_{n=1}^N \bm{{\hat{a}}}_{\ell, n} \bm{{\hat{a}}}_{\ell, n}^\transpose \right]}_{=: \bm{{\hat{A}}}_\ell} \otimes \underbrace{\left[ \sum_{n=1}^N \bm{{\hat{b}}}_{\ell, n} \mLambda(f_\vtheta(\sX_n^G)) \bm{{\hat{b}}}_{\ell, n}^\transpose \right]}_{=: \bm{{\hat{B}}}_\ell},
\end{split}
\end{equation}
analogously to \Cref{eq:kfac_reduce_derivation}.
However, there is one important difference: since $R_n$ now depends on the $n$-th data point, it makes a difference where we insert the scaling by $1/R_n$, since $\bm{{\hat{A}}}_\ell$ and/or $\bm{{\hat{B}}}_\ell$ are now weighted sums.
To avoid having one potentially non-uniformly weighted and one unweighted sum, we choose to not just include the scaling by $1/R_n$ in $\bm{{\hat{a}}}_{\ell, n}$ as in \Cref{eq:kfac_reduce_derivation}, or in $\bm{{\hat{b}}}_{\ell, n}$.
Instead, to try to keep the overall scale and the weighting of the $N$ terms balanced, we simply include a scaling by $1/\sqrt{R_n}$ in \emph{both}, $\bm{{\hat{a}}}_{\ell, n}$ and $\bm{{\hat{b}}}_{\ell, n}$.

\subsection{Practical Considerations}
\label{app:practical}

While we have discussed theoretically how to apply K-FAC to linear weight-sharing layers, we now turn to implementation details and computational considerations that are crucial for the practical application of the approximations.

\textbf{Implementation details.}
There are multiple libraries that implement K-FAC for popular deep learning frameworks like \texttt{Jax} \citep{jax2018github} and \texttt{PyTorch} \citep{paszke2019pytorch}, \eg \texttt{KFAC-JAX} \citep{kfac-jax2022github} for \texttt{Jax} and \texttt{BackPACK} \citep{dangel2020backpack}, \texttt{ASDL} \citep{osawa2023asdl}, and \texttt{KFAC-PyTorch} \citep{pauloski2021kaisa} for \texttt{PyTorch}.
We focus on the implementation of K-FAC-expand and K-FAC-reduce within \texttt{ASDL}, which we also use for the experiments in \Cref{sec:experiments}.
K-FAC is implemented using forward and backward hooks, which allow us to get the inputs to a specific layer and the gradients of the loss \wrt the layer outputs -- which are the ingredients we need for all K-FAC approximations.
Notably, this requires that linear layers are implemented with \texttt{torch.nn.Linear} instances, since otherwise, the implementation with hooks does not work.
The default implementation of multi-head attention in \texttt{PyTorch} does indeed not use the required linear modules, so the implementation has to be adjusted to work with common K-FAC implementations like \texttt{ASDL}.
In contrast, other methods like Shampoo \citep{gupta2018shampoo} and Tensor Normal Training \citep{ren2021tensor} are agnostic to the architecture.
Assuming the implementation of the model is appropriate, K-FAC-expand and K-FAC-reduce in their simplest form only require a minor adjustment in the code base for regular K-FAC, which is presented in \Cref{lst:code}.
\begin{listing}[t]
\begin{minted}{python}
  # Check if there even is a weight-sharing dimension; if not, the Kronecker
  # factors can directly be calculated.
  if in_data.ndim == 3:
      # Mini-batch size $M$, weight-sharing dimension $R$, feature dimension $P_{\ell, \mathrm{in}/\mathrm{out}}$.
      M, R, P_in = in_data.shape
      P_out = out_grads.shape[2]
      if approximation == 'expand':
          # Flatten the weight-sharing dimension into the mini-batch dimension.
          in_data = in_data.view(M*R, P_in) / math.sqrt(R)
          out_grads = out_grads.view(M*R, P_out)
      elif approximation == 'reduce':
          # Reduce the weight-sharing dimension with mean and sum.
          in_data = in_data.mean(dim=1)
          out_grads = out_grads.sum(dim=1)
  # Calculate Kronecker factors $\mA_\ell$/$\bm{{\hat{A}}}_\ell$ and $\mB_\ell$/$\bm{{\hat{B}}}_\ell$.
  A = torch.matmul(in_data.T, in_data) / M
  B = torch.matmul(out_grads.T, out_grads)
\end{minted}
\caption{\textbf{Illustration of K-FAC-expand and K-FAC-reduce with code.}
This piece of code calculates the approximations on one mini-batch and for one layer.
We receive the inputs to the layer, \texttt{in\_data}, from a forward hook and the gradients of the loss \wrt the outputs of the layer, \texttt{out\_grads}, from a backward hook.
Here we assume that only one additional weight-sharing dimension exists and that the first dimension is always the mini-batch dimension.
Note that the dimensions of \texttt{in\_data} and \texttt{out\_grads} are the same, \emph{independently of the setting we are in} (\cf \Cref{subsec:base_cases}).
This illustrates why we can choose to use either approximation, K-FAC-expand or K-FAC-reduce, in each setting.
The actual implementation in \texttt{ASDL} is very similar for \texttt{torch.nn.Linear} modules, the logic is just separated into multiple functions and allows for multiple weight-sharing dimensions.
However, the adjustments for the GraphNetwork are more involved.}

\label{lst:code}
\end{listing}

However, if we wanted to use K-FAC-expand in the expand and K-FAC-reduce in the reduce setting, we would need to find a way of automatically determining the setting we are in.
For all models considered here, \ie the (vision) transformer and GNN, only one of the two settings applies to all linear layers with weight-sharing.
Hence, using a single additional forward pass, we could check if any linear weight-sharing layers are used and what the shape of the model output is.
From this, we can deduce if the expand or the reduce case applies.
As we mentioned before, this might not even be desirable, as it is unclear if we should always use the approximation theoretically motivated by the setting.
Alternatively, a single flag set by the user can determine if K-FAC-expand or K-FAC-reduce is applied to all linear weight-sharing layers.

This implementation obviously assumes that we even have an explicit weight-sharing dimension.
In the case of K-FAC-reduce for the GraphNetwork introduced in \Cref{app:gnn}, we have to adopt our implementation due to the batching technique that is employed for graph inputs in practice.
Since each graph in a mini-batch $\mathcal{M}$ of size $M$ might have a different weight-sharing dimension $R_m$, \ie the number of nodes and the number of edges of each graph, we cannot batch them trivially.
As a solution, the inputs for each graph as stated in \Cref{eq:node_inputs} and \Cref{eq:edge_inputs} are simply concatenated in the first dimension, which results in a dimension of size $R_\mathcal{M} := \sum_{m=1}^M R_m$.
To apply K-FAC-expand here, we do not have to modify anything, besides scaling the approximation for each mini-batch by $1 / R_\mathcal{M}$ instead of $1 / M$.\footnote{As explained below \Cref{eq:gn_kfac}, we could also choose a different way of scaling here. We choose this one as it enables a simple implementation in contrast to K-FAC-reduce.}
To apply K-FAC-reduce, we can use a \emph{scatter mean}, which aggregates tensor elements according to indices, to implement the mean (with the square root scaling from \Cref{eq:gn_kfac}) operation without having an explicit weight-sharing dimension.
Unfortunately, this creates two issues.
First, we have to know that this adjustment to K-FAC is even required for a specific layer since we cannot deduce it from the shape of the layer inputs.
Second, the scatter mean requires additional information, since we need to know to which graphs the nodes/edges in the input belong.
One approach to resolve these issues is to define a custom layer type for this type of linear layer, which has an attribute containing the indices of all nodes/edges for each graph in the batch.
However, this requires changes to the model architecture implementation, because additional attributes have to be set for the regular linear modules.

Besides the changes necessary for K-FAC-expand and K-FAC-reduce, we can use the same additional algorithmic tools often used for optimisation with K-FAC.
Typically, \emph{damping} is used \citep{martens2015optimizing}, \ie a scalar is added to the diagonal of the two Kronecker factors $\mA$ and $\mB$ or the diagonal of their product -- the latter corresponds to adding the Hessian of an isotropic Gaussian prior over the weights.
Also, since we usually operate in the stochastic setting and only compute the K-FAC approximation on mini-batches, sometimes an exponential moving average over the Kronecker factors is used.
However, in our experiments, we do not use such a moving average and only compute K-FAC for a single mini-batch (besides for \Cref{fig:imagenet_vit_slow}, as mentioned in \Cref{app:exp_vit}) and still reduce the update frequency of the K-FAC statistics and the preconditioner.

\textbf{Computational considerations.}
Besides the implementation details, we also have to consider the computational cost when deploying K-FAC approximations in practice.
Here, we have to respect the same constraints as with regular K-FAC.
When we have a large output dimension $C$, it is expensive or even unfeasible to propagate the $C \times C$ loss Hessian $\mH_{f_\vtheta}\ell(\vy_n, f_\vtheta(\vx_n))$ for each of the $N$ data points through the computation graph.
Instead, we use the fact that we have
\begin{equation}
\label{eq:mc_fisher}
  \E_{\vy \sim p(\vy | f_\vtheta(\vx_n))}[\nabla_{f_\vtheta} \log  p(\vy|f_\vtheta(\vx_n)) \nabla_{f_\vtheta} \log  p(\vy|f_\vtheta(\vx_n))^\transpose] = \mH_{f_\vtheta}\ell(\vy_n, f_\vtheta(\vx_n))
\end{equation}
and take $S$ Monte Carlo (MC) samples from the model's predictive distribution $\vy_s \sim p(\vy | f_\vtheta(\vx_n)$.
Taking a single sample results in a rank-1 MC approximation of the true loss Hessian and only requires the propagation of a single vector through the computation graph for each data point.
%Also, as mentioned before, to decrease the computational overhead of K-FAC the frequency of how often the Kronecker factors and the inverse for preconditioning are computed can be decreased, such that these operations are not executed at every iteration.

\section{Additional experimental details and results}
\label{app:exp}

\subsection{MLCommons' \texttt{AlgoPerf} benchmark for training algorithms}
\label{app:algoperf}

% AlgoPerf in general.
The goal of the \texttt{AlgoPerf} benchmark for training algorithms \citep{dahl2023benchmarking} is to measure ``training speedups due to algorithmic improvements'' \citep{algorithms}.
Specifically, the benchmark defines multiple workloads, where one workload is defined by the combination of a dataset, a neural network model, a loss function, and a target performance defined by some evaluation metric.
For example, training a ViT on ImageNet using the cross-entropy loss until a target validation accuracy of $77.309$\% has been reached constitutes a workload.
In the \texttt{AlgoPerf} benchmark, training algorithms are compared on fixed hardware in terms of the wall-clock time they require to reach the fixed validation target performance.

% What we use:
For our experiments, we use two of the workloads of the \texttt{AlgoPerf} benchmark as realistic deep learning problems, to showcase the optimisation behaviour of our two K-FAC variations (\Cref{subsec:exp_gnn,subsec:exp_vit}).
Similar to the benchmark, we measure both the number of steps and the wall-clock time necessary to reach the target validation metric.
To put the training behaviour of K-FAC-expand and K-FAC-reduce into perspective, we compare them to the \emph{target-setting} run of the \texttt{AlgoPerf} benchmark, denoted \emph{reference run} in the experiments (\Cref{subsec:exp_gnn,subsec:exp_vit}).
To determine the fixed validation target for each workload, four standard algorithms (AdamW, NAdamW, SGD with Nesterov momentum (Nesterov), and SGD with heavy ball momentum) were each tuned with a budget of $200$ trials for every workload.
The combination of algorithm and hyperparameters that reached the highest validation performance within a fixed time was then run $20$ times and the median of the best achieved validation metric was set as the validation target.
As our reference run, we use the workload-specific, best-performing algorithm and hyperparameter combination.
Note that this should not be confused with the baseline algorithms in the \texttt{AlgoPerf} benchmark, since the target-setting runs are tuned for the best possible validation performance on each workload.

K-FAC is run with most of the hyperparameters of the reference run and a tuning budget of $15$ runs for tuning the learning rate and damping.
This means that most hyperparameters are not directly tuned for K-FAC.
To enable a fairer comparison, we also use the same additional budget for tuning the learning rate of the reference run, the same learning rate schedule, and tuning goal as K-FAC.
However, none of these runs reaches the validation metric target, which is why the reference runs correspond to the original target-setting runs.
Despite this, the experiments presented in \Cref{subsec:exp_gnn,subsec:exp_vit} are \emph{not} meant to demonstrate that K-FAC is a superior training algorithm.
To provide actual evidence for this claim we would have to run a \emph{valid} and optimised submission on the entire \texttt{AlgoPerf} benchmark or a benchmark of comparable rigour.
Nevertheless, using the well-tuned reference run allows us to put K-FAC's training performance into perspective.

The two \texttt{AlgoPerf} workloads used in this paper are:

\textbf{Graph neural network on ogbg-molpcba.}
The workload consists of a (binary) cross-entropy loss, the GraphNetwork instance described in \Cref{app:gnn}, and the ogbg-molpcba molecular property prediction dataset \citep{hu2020open}.
Each molecule is represented as a graph, where atoms are nodes and edges are chemical bonds.
The task is to predict whether or not a molecule has certain chemical properties; there are 128 different properties.
For training, we have about $350\mathrm{k}$ examples and almost $44\mathrm{k}$ for validation and testing.
The validation target mean average precision (mAP) determined by the target-setting runs is $0.28098$ and the best performing reference run algorithm is Nesterov.

\textbf{Vision transformer on ImageNet.}
This workload also uses the cross-entropy loss, a vision transformer architecture, and the LSVRC-2012 ImageNet (short: ImageNet) image dataset \citep{russakovsky2015imagenet}.
The goal is to classify images into one of 1000 classes.
There are about $1.3$ million training, $50\mathrm{k}$ validation, and $10\mathrm{k}$ test examples.
The validation target accuracy determined by the target-setting runs is $0.77309$ and the best performing reference run algorithm is NAdamW.

\begin{table}[t]
	\centering
  \caption{Results from \Cref{tab:timing} with standard errors over $10$ epochs.}
	\label{app:tab:timing}
  \begin{tabular}{c c c c c c}
    \toprule
    \multirow{2}{*}{K-FAC} & \multicolumn{5}{c}{Batch size} \\
          & $128$ & $256$ & $512$ & $1024$ & $2048$ \\
    \midrule
    \textcolor{expand}{expand} & $0.24 \pm 0.01$ & $0.38 \pm 0.02$ & $0.75 \pm 0.03$ & $1.36 \pm 0.05$ & OOM \\
    \addlinespace
    \textcolor{reduce}{reduce} & $0.17 \pm 0.01$ & $0.24 \pm 0.02$ & $0.43 \pm 0.03$ & $0.63 \pm 0.04$ & $1.17 \pm 0.08$ \\
    \bottomrule
  \end{tabular}
\end{table}

\subsection{Update step speed with K-FAC-expand and K-FAC-reduce}
\label{app:exp_timing}

\begin{wraptable}{r}{0.4\textwidth}
	\centering
  \vspace{-1.9em}
  \caption{Timing of K-FAC GGN approximation for GPT-2 on full DART dataset.}
	\label{tab:gpt}
	\vspace{0.5em}
	\begin{adjustbox}{max width=0.4\textwidth,caption={test}}
		\begin{tabular}{c c c c c}
			\toprule
			K-FAC & Absolute time [min] $\downarrow$ & Relative time [\%] $\downarrow$ \\
			\midrule
			\textcolor{expand}{expand} & $9.55 \pm 0.13$ & 100 \\
			\textcolor{reduce}{reduce} & $6.68 \pm 0.04$ & 70 \\
			\bottomrule
		\end{tabular}
	\end{adjustbox}
	\vspace{-0.5em}
\end{wraptable}

We provide the full results of the update step timing experiment with standard errors over $10$ epochs in \Cref{app:tab:timing}.
The model architecture, optimiser, and data setup are exactly the same as described in \Cref{app:exp_marglik}.\FS{Maybe it makes more sense to provide the numbers below in a table? That way, one can gauge easier how much the hyperparameters differ between workloads and algorithms.}
The full results of the GPT-2 (\texttt{nanoGPT}) timing experiment are presented in \Cref{tab:gpt};
the mean and standard error are computed based on three runs.

\subsection{Graph neural network on ogbg-molpcba}
\label{app:exp_gnn}

For this workload, we use a training batch size of $512$ and a single NVIDIA V100 32GB GPU for each run.
The reference run algorithm (Nesterov) uses a learning rate of $2.4917728606918423$, $\beta_1$ equal to $0.9449369031171744$, weight decay set to \num{1.2859640541025928e-7}, and linear warmup for $3{,}000$ steps and a polynomial schedule with a decay steps factor of $0.861509027839639$ and an end factor of \num{1e-3}.
The two K-FAC variations use the exact same hyperparameters, but the warmup and expected number of steps ($60{,}000$) are multiplied by $0.75$ and the learning rate and the damping are tuned via random search.
The search space for the learning rate is log uniform values in $[0.1, 10]$ and for the damping in $[\num{1e-3}, 1]$.
We choose the hyperparameters of the run that first reaches the validation target.
This setting is a learning rate of $9.96871902194967$ and damping of $0.7881965339190345$ for K-FAC-expand and a learning rate of $0.5885756514016359$ and damping of $0.0579230193904011$ for K-FAC-reduce.
We then repeated each run five times.
The Kronecker factors and the preconditioner are computed every $10$ iterations and we use a single sample MC approximation of the Fisher, as explained in \Cref{app:practical}.

\subsection{Vision transformer on ImageNet}
\label{app:exp_vit}

We use a training batch size of $1{,}024$, $\epsilon = \num{1e-8}$ for NAdamW, and $4\times$ NVIDIA V100 32 GPUs for all runs on this workload.
The reference run algorithm (NAdamW) is using a learning rate of $0.0008445074561975979$, $\beta_1$ equal to $0.8895758153482813$, $\beta_2$ to $0.9978504782314613$, weight decay set to $0.08135402759553023$, linear warmup for $6999$ steps, and a cosine decay schedule.
The two K-FAC variations use the exact same hyperparameters, but the warmup and expected number of steps ($140{,}000$) is multiplied by $0.75$ and the learning rate and the damping are tuned via random search.
The search space for the learning rate and the damping is log uniform values in $[\num{1e-4}, \num{1e-2}]$.
We choose the hyperparameters of the run that first reaches the validation target.
For both K-FAC variations, the best setting is a learning rate of $0.0012662938340704357$ and damping of $0.00016524019235426572$.
The Kronecker factors and the preconditioner are computed every $50$ iterations and we use a single sample MC approximation of the Fisher, as explained in \Cref{app:practical}.

We also conduct a run where we update the Kronecker factors every step, use an exponential moving average with a factor (\texttt{ema\_decay} in \texttt{ASDL}) equal to $\beta_2$, update the preconditioner every $10$ steps, set the damping to \num{1e-5}, and use all the other hyperparameters of a reference run setting, including the learning rate.
The reference run setting corresponds to an earlier target-setting run result from the \texttt{AlgoPerf} repository and uses a learning rate of about \num{2e-3}, $\beta_1 = 0.7132, \beta_2 = 0.9982$, and the weight decay is set to $0.026595$.
Moreover, it clips the gradients to keep their norm below 1.
As before, we also multiply the number of warmup and expected number of steps by $0.75$.
Due to the high update frequency of the Kronecker factors, we can see the significant wall-clock time difference between K-FAC-expand and K-FAC-reduce in \Cref{fig:imagenet_vit_slow}.
Both variations are similar in terms of steps to the target, K-FAC-expand takes about $92.6\mathrm{k}$ and K-FAC-reduce takes about $93.7\mathrm{k}$ steps, but whereas K-FAC-expand takes about $50$ hours, K-FAC-reduce reaches the target after about $37$ hours.
Note, that both variations are still significantly slower than the NAdamW reference run which only takes about $25$ hours, but $117.4\mathrm{k}$ steps.
\begin{figure}[t]
	\centering
	\includegraphics[width=\textwidth]{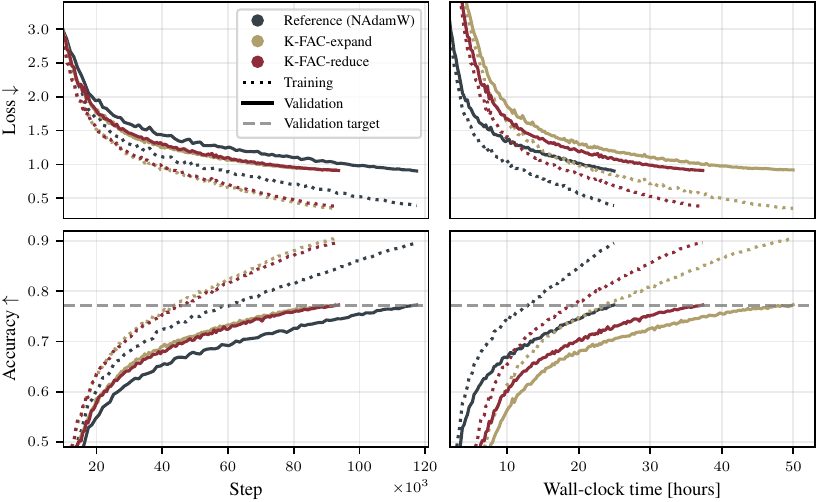}
	\caption{\textbf{Training results for a vision transformer on ImageNet.}
      This is similar to \Cref{fig:imagenet_vit}, but the K-FAC statistics are updated \emph{every} step and different hyperparameters are used. Due to K-FAC's overhead, the wall-clock time is not reduced in this setting. Moreover, the discrepancy in speed between K-FAC-expand and K-FAC-reduce becomes apparent.}
	\label{fig:imagenet_vit_slow}
\end{figure}

\subsection{K-FAC for automatic hyperparameter selection via marginal likelihood optimisation}
\label{app:exp_marglik}
K-FAC is used for the Hessian approximation within the Laplace approximation by first accumulating it over the entire data set and using the eigendecomposition of Kronecker factors as in \citet{immer2021scalable}. 
This is done every five epochs during training to update the weight decay parameters per layer of the neural network. 
Therefore, the computation time of K-FAC makes a significant contribution to the overall training time: reduce takes on average $75$\% of the overall time of expand, a result of reducing the wall-clock time of a single marginal likelihood update step by about $50$\%, as shown in \cref{tab:marglik}.
Here, we provide additional experimental details and the full table with standard errors.

For the marginal likelihood optimisation experiment, we use the same settings as \citet{daxberger2021laplace}, \ie, a Wide ResNet 16-4 with Fixup initialisation and parameters instead of batch normalisation~\citep{zagoruyko2016wide,zhang2019fixup}.
We use a batch size of $128$, an initial learning rate of $0.1$ cosine decayed to \num{1e-6} and weight decay of \num{5e-3} training for $100$ epochs.
We use standard data augmentation with horizontal flips and random crops.
Results are averaged over three seeds and the timings are obtained running all runs on a single NVIDIA A100 GPU and locally measuring the hyperparameter update time. 
The full results with standard errors are presented in \Cref{app:tab:marglik}.

\begin{table}[t]
  \center
  \caption{Results from \cref{tab:marglik} with standard errors over three random seeds.
    The update time is the average time per full K-FAC Laplace approximation to the marginal likelihood, which requires a full data set pass with computation and eigendecomposition of the Kronecker factors.}
  \label{app:tab:marglik}
  \begin{tabular}{ccccc}
    \toprule
    K-FAC & Data Augmentation & NLL $\downarrow$ & Accuracy [\%] $\uparrow$ & Update Time [s] $\downarrow$ \\
    \midrule
    \multirow{2}{*}{\textcolor{expand}{expand}} & \xmark & 0.422 $\pm$ 0.013 & 88.89 $\pm$ 0.24 & \multirow{2}{*}{196 $\pm$ 22} \\
     & \cmark & 0.244 $\pm$ 0.004 & 92.52 $\pm$ 0.12 \\
     \addlinespace
    \multirow{2}{*}{\textcolor{reduce}{reduce}} & \xmark & 0.703 $\pm$ 0.012 & 86.71 $\pm$ 0.13 & \multirow{2}{*}{99 $\pm$ 23} \\
     & \cmark & 0.352 $\pm$ 0.008 & 93.50 $\pm$ 0.06 \\
    \bottomrule
    \end{tabular}
\end{table}
% \multirow[t]{2}{*}{expand} & \xmark & 0.013022 & 0.002418 & 38.187695 \\
%  & \cmark & 0.003801 & 0.001217 & 7.276696 \\
% \cline{1-5}
% \multirow[t]{2}{*}{reduce} & \xmark & 0.012156 & 0.001335 & 29.516864 \\
%  & \cmark & 0.007730 & 0.000649 & 18.516254 \\

\end{document}